\newtheorem{corollary}{Corollary}
\newtheorem{theorem}{Theorem}
\newtheorem{lemma}{Lemma}
\def\argmax{\mathop{\rm argmax}}
\def\byi{\boldsymbol{y_i}}
\def\sign{\mathop{\rm sign}}
\def\diag{\mathop{\rm diag}}
\definecolor{darkorange1}{rgb}{1.0, 0.8, 0.0} \newcommand\resa[1]{\textcolor{darkorange1}{\textbf{#1}}}
\definecolor{darkorange2}{rgb}{1.0, 0.67, 0.0} \newcommand\resb[1]{\textcolor{darkorange2}{\textbf{#1}}}
\definecolor{darkorange3}{rgb}{1.0, 0.54, 0.0} \newcommand\resc[1]{\textcolor{darkorange3}{\textbf{#1}}}
\definecolor{darkorange4}{rgb}{1.0, 0.41, 0.0} \newcommand\resd[1]{\textcolor{darkorange4}{\textbf{#1}}}
\definecolor{darkorange5}{rgb}{1.0, 0.28, 0.0} \newcommand\rese[1]{\textcolor{darkorange5}{\textbf{#1}}}
\definecolor{Red}{rgb}{1.0,0.0,0.0}
\begin{document}

%

%

\twocolumn[

\aistatstitle{Adversarial Robustness via Label-Smoothing}

\aistatsauthor{ Morgane Goibert\\m.goibert@criteo.com \And Elvis Dohmatob\\e.dohmatob@criteo.com}

\aistatsaddress{ Criteo AI Lab \And  Criteo AI Lab} ]

\begin{abstract}
We study Label-Smoothing as a means for improving adversarial robustness of supervised deep-learning models. After establishing a thorough and unified framework, we propose several variations to this general method: adversarial, Boltzmann and second-best Label-Smoothing methods, and we explain how to construct your own one. On various datasets (MNIST, CIFAR10, SVHN) and models (linear models, MLPs, LeNet, ResNet), we show that Label-Smoothing in general improves adversarial robustness against a variety of attacks (FGSM, BIM, DeepFool, Carlini-Wagner) by better taking account of the dataset geometry. The proposed Label-Smoothing methods have two main advantages: they can be implemented as a modified cross-entropy loss, thus do not require any modifications of the network architecture nor do they lead to increased training times, and they improve both standard and adversarial accuracy.
\end{abstract}

\section{INTRODUCTION}
\label{intro}

Neural Networks (NNs) have proved their efficiency in solving classification problems
in areas such as computer vision \cite{krizhevsky2012imagenet}. Despite these
successes, recent works have shown that NN
are sensitive to adversarial examples (e.g \cite{szegedy2013intriguing}), which is problematic for critical
applications \cite{sitawarin2018darts} like self-driving cars and medical diagnosis.
Many strategies have thus been developed to improve robustness and in an "arms race", different
attacks have been proposed to evade these defenses. Broadly speaking, an
adversarial attack succeeds when an image is slightly modified so that it still looks \emph{to a human} like it belongs to a
right class, but a classifier misclassifies it. Despite the number of
works on it \cite{goodfellow2014explaining, fawzi2016robustness,
  tanay2016boundary, tramer2017space}, there is still no complete understanding of the adversarial
phenomenon. Yet, the vulnerability of NN to
adversarial attacks suggests a shortcoming in the generalization of the network. As overconfidence in predictions hinders generalization, addressing it can be a good
way to tackle adversarial attacks \cite{zheng2018improvement}. Label-Smoothing
(LS) ~\cite{labelsmoothing,labelsmoothingbis} is a method which creates uncertainty in the labels of a dataset used to train a
NN. This uncertainty helps to tackle the over-fitting issue, and thus LS can be
an efficient method to address the adversarial attack phenomenon.

\subsection{Notations And Terminology}
\paragraph{General} We denote by $\mathcal{X}$ the input space of dimension $d$, and $\mathcal{Y} = \{1,...,K\}$ the label space, $P_{X,Y}$ is the true (unknown) joint distribution of $(X,Y)$ on $\mathcal{X} \times \mathcal{Y}$. $\Delta_K := \{q \in \mathbb R^K \mid q \ge 0,\; \sum_{k=1}^K q_k = 1\}$ is the $(K-1)$-dimensional probability simplex $\Delta_K$, identified with the set $\mathcal P(\mathcal Y)$ of probability distributions on $\mathcal Y$.

An iid sample drawn from $P_{X,Y}$ is written $S_n = \{(x_1, y_1), ... (x_n,
y_n)\}$. To avoid any ambiguity with the label $y \in [\![K]\!]$, we use boldface $\boldsymbol{y}=(0,\ldots,1,\ldots,0) \in \Delta_K$ to denote the one-hot encoding of $y$. The empirical distribution of the input-label pairs $(x,y)$ is written $\hat{P}^n_{X,Y}$.
A classifier is a measurable function $h : \mathcal{X} \to \mathcal{Y}$, usually parametrized by real parameters $\theta$; for example NN with several hidden layers, with the last always being a softmax function. The logits of the classifier (pre-softmax) are written $z(x,\theta)$, and $z^{(k)}(x,\theta)$ is its component for the $k$th class. The prediction vector of the classifier (post-softmax) is written $p(x;\theta)$. 

\paragraph{Adversarial Attacks}
An attacker constructs an \emph{adversarial example} based on a clean input $x$ by adding a perturbation to it: $x^{adv} = x + \delta$. The goal of the attack is to have $h(x^{adv}) \neq h(x)$. The norm of the the perturbation vector $\delta$ measures the size of the attack. In this work, we limit ourselves to $\ell_\infty$-norm attacks, wherein $\|\delta\|_\infty := \max_{k=1}^p|\delta^k|$. A tolerance threshold $\varepsilon$ controls the size of the attack: the attacker is only allowed to inflict perturbations of size $\|\delta\|_\infty \le \varepsilon$.

\subsection{Related Works}


Our work will focus on untargeted, white-box attacks, i.e. threat models that only seek to fool the NN (as opposed to tricking it into predicting a specific class), and have unlimited access to the NN parameters. State-of-the art attacks include FGSM \cite{goodfellow2014explaining} a very simple, fast and popular attack; BIM \cite{kurakin2016adversarial}, an iterative attack based on FGSM; DeepFool \cite{moosavi2016deepfool} and C\&W \cite{carlini2017towards}. Note that the tolerance threshold $\varepsilon$ can be explicitely tuned in FGSM and BIM, but not in DeepFool and C\&W. 

Many defences have been proposed to counteract adversarial attacks. The main one is adversarial training \cite{goodfellow2014explaining, madry2017towards}, which consists in feeding a NN with both clean and adversarially-crafted data during training. This defense method will be used in this paper as a baseline for comparative purposes. Another important method is defensive distillation \cite{papernot2016distillation, papernot2016effectiveness}, which is closely related to LS. This method trains a separate NN algorithm and uses its outputs as the input labels for the main NN algorithm. This was an efficient defense method until recently broken by C\&W attack \cite{carlini2017towards}. These defense methods are both very efficient, but time-costly and also, can reduce standard accuracy \cite{tsipras2018robustness}.

LS was first introduced as a regularization method \cite{pereyra2017regularizing, labelsmoothing}, but was also briefly studied as a defense method in \cite{shafahi2018label}. One of the contributions of our paper is to generalize the idea of LS proposed and used in these previous works, and  propose three novel variants relevant for the adversarial issue. We develop theoretical as well as empirical results about the defensive potential of LS.

For a more thorough introduction to the field, interested readers can refer to surveys like \cite{akhtar2018threat, zhang2018adversarial}.

\subsection{Overview of main contributions}
In section \ref{contrib}, we develop a unified framework for Label-Smoothing (LS) of which \cite{labelsmoothing} is a special case. We propose a variety of new LS methods, the main one being Adversarial Label-Smoothing (ALS). We show that in general, LS methods induce some kind of logit-squeezing, which results in more robustness to adversarial attacks. In section \ref{sec:understanding}, we give a complete mathematical treatment of the effect of LS in a simple case, with regards to robustness to adversarial attacks. Section \ref{exp} reports empirical results on real datasets. In section \ref{conclu}, we conclude and provide ideas for future works.

Proofs of all theorems and lemmas are provided in the Appendix (supplementary materials).

\section{UNIFIED FRAMEWORK FOR LS}
\label{contrib}

In standard classification datasets, each example $x$ is hard-labeled with exactly one class $y$. Such overconfidence in the labels can lure a classification algorithm into over-fitting the input distribution ~\cite{labelsmoothing}.
LS ~\cite{labelsmoothing,labelsmoothingbis} is a resampling technique wherein one replaces the vector of probability one on the true class $\boldsymbol{y}$ (i.e the one-hot encoding) with a different vector $q$ which is "close" to  $\boldsymbol{y}$.
Precisely, LS withdraws a fraction of probability mass from the "true" class label and reallocates it to other classes. As we will see, the redistribution method is quite flexible, and leads to different LS methods.

Let $\text{TV}(q'\|q):= (1/2)\|q'-q\|_1$ be the Total-Variation distance between two probability vectors $q',q \in \Delta_K$.
For $\alpha \in [0, 1]$, define the uncertainty set of acceptable label distributions $\mathcal{U}_\alpha(\hat{P}^n_{X,Y})$ by
\begin{equation*}
  \begin{split}
    &\mathcal{U}_\alpha(\hat{P}^n_{X,Y}) := \{\hat{P}^n_{X}\hat{Q}^n_{Y|X} \mid \operatorname{TV}(\hat{Q}^n_{Y|x}\|\hat{P}^n_{Y|x}) \le \alpha, \\
    & \quad\quad\quad\quad\quad\quad\;\forall x \in \mathcal X\}\\
  &=\left\{\frac{1}{n}\sum_{i=1}^n\delta_{x_i} \otimes q_i \; | \; q_i \in \Delta_K,\; \text{TV}(q_i\|\delta_{y_i})\le \alpha,\right. \\
  & \quad\quad\;\forall i \in [\![n]\!] \Bigg\},
  \end{split}
\end{equation*}
made up of joint distributions $\hat{Q}^n_{X,Y} \in \mathcal{P}(\mathcal X \times \mathcal Y)$
on the dataset $S_n$, for which the conditional label distribution
$q_i:= \hat{Q}^n_{Y|X=x_i} \in \Delta_K$ is within  TV distance less than $\alpha$ of the one-hot encoding of the observed label ${y_i}$.
By direct computation, one has that $\text{TV}(q_i\|{y_i}) = (1/2)
\left(\sum_{j \ne y_i}q_i^{(j)} + 1 - q_i^{(y_i)}\right) = 1-q_i^{(y_i)}$ and so the uncertainty set can be rewritten as
\begin{eqnarray*}
  \begin{split}
  \mathcal U_\alpha(\hat{P}^n_{X,Y}) &=\left\{\frac{1}{n}\sum_{i=1}^n \delta_{x_i} \otimes q_i \; | \;  q_i \in \Delta_K, \right. \\
  & \;q_i^{(y_i)}\ge 1 - \alpha \; \forall i \in [\![1,n]\!]\Bigg\}.
  \end{split}
\end{eqnarray*}
Any conditional label distribution $q_i$ from the uncertainty set $\mathcal U_\alpha(\hat{P}^n_{X,Y})$ can be written
\begin{eqnarray}
  q_i = (1-\alpha)\boldsymbol{y_i}+\alpha q_i', \quad q_i' \in \Delta_K.
  \label{eq:general_q}
\end{eqnarray}
Different choices of the probability vector $q_i'$ (which as we shall see in section ~\ref{subsec:als}, can depend on the model parameters!) lead to different Label-Smoothing methods. The training of a NN with general label smoothing then corresponds to the following optimization problem:
\begin{eqnarray}
\label{eq:LS_program}
    \min_{\theta} \frac{1}{n}\sum_{i=1}^n \operatorname{SmoothCE}(x_i, q_i;\theta),
\end{eqnarray}
where $\operatorname{SmoothCE}(x,q;\theta)$ is the \textit{smoothed} cross-entropy loss (a generalization of the standard cross-entropy loss), defined by
\begin{align*}
    \operatorname{SmoothCE}(x,q; \theta) &:= -q^T \log(p(x;\theta))\\
    &=-\sum_{k=1}^Kq^{(k)}\log(p^{(k)}(x;\theta)).
  \label{eq:loss}
\end{align*}

It turns out that the optimization problem \eqref{eq:LS_program} can be rewritten as the optimization of a usual cross-entropy loss, plus a penalty term on the gap between the components of logits (one logit per class) produced by the model on each example $x_i$.

\begin{theorem}[General Label-Smoothing Formulation]
\label{thm:main_thm}
\textit{The optimization problem \eqref{eq:LS_program} is equivalent to the logit-regularized problem
\begin{eqnarray*}
    \min_{\theta}L_n(\theta)+ \alpha R_n(\theta),
\end{eqnarray*}
where $L_n(\theta) :=
-\frac{1}{n}\sum_{i=1}^n\log(p(x_i;\theta))$ is the standard cross-entropy
    loss, and
    \begin{eqnarray*}
      R_n(\theta) := \frac{1}{n}\sum_{i=1}^n(\boldsymbol{y_i}-q'_i)^T z_i.
      \label{eq:reg_gen}
    \end{eqnarray*}
where $z_i:=z(x_i; \theta) \in \mathbb R^K$ is the logits vector for $x_i$.
}
\end{theorem}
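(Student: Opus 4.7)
The plan is a direct algebraic manipulation exploiting two structural facts: (i) $q_i$ decomposes linearly as $(1-\alpha)\boldsymbol{y_i} + \alpha q_i'$, so the smoothed cross-entropy splits into a piece aligned with the one-hot label and a piece aligned with $q_i'$; and (ii) for a softmax output, the log-probabilities $\log p^{(k)}(x;\theta) = z^{(k)}(x;\theta) - \log\sum_j e^{z^{(j)}(x;\theta)}$ differ from the logits only by a data-dependent shift that is constant in $k$, so any inner product $q^T \log p(x;\theta)$ becomes $q^T z(x;\theta) - \log\sum_j e^{z^{(j)}(x;\theta)}$ whenever $q \in \Delta_K$ (since $\sum_k q^{(k)} = 1$).

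First I would fix an index $i$ and substitute \eqref{eq:general_q} into the definition of $\operatorname{SmoothCE}(x_i,q_i;\theta)$, getting
\begin{equation*}
\operatorname{SmoothCE}(x_i,q_i;\theta) = -(1-\alpha)\,\boldsymbol{y_i}^T\log p(x_i;\theta) - \alpha\,(q_i')^T\log p(x_i;\theta).
\end{equation*}
Then I would apply fact (ii) above to each of the two inner products: because both $\boldsymbol{y_i}$ and $q_i'$ lie in $\Delta_K$, the log-sum-exp normalizer appears with coefficient $(1-\alpha)+\alpha = 1$, so it combines into a single term $\log\sum_j e^{z_i^{(j)}}$. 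Collecting the logit contributions gives
\begin{equation*}
\operatorname{SmoothCE}(x_i,q_i;\theta) = -\boldsymbol{y_i}^T z_i + \log\!\sum_j e^{z_i^{(j)}} + \alpha\,(\boldsymbol{y_i} - q_i')^T z_i.
\end{equation*}

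The next step is to recognize the first two terms as the ordinary cross-entropy $-\boldsymbol{y_i}^T \log p(x_i;\theta)$, which matches the $i$-th summand of $L_n(\theta)$ (under the standard convention that $-\log p(x_i;\theta)$ in the theorem statement is shorthand for $-\boldsymbol{y_i}^T\log p(x_i;\theta) = -\log p^{(y_i)}(x_i;\theta)$). Averaging over $i = 1,\dots,n$ and factoring out $\alpha$ from the remaining term yields exactly $L_n(\theta) + \alpha R_n(\theta)$ with $R_n$ as defined in the statement. Since the transformation is an identity of objective functions (not merely of minimizers), the two optimization problems coincide.

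There is no real obstacle here; the only delicate point is the minor notational abuse in $L_n(\theta) = -\frac{1}{n}\sum_i \log p(x_i;\theta)$, which one should read as the standard cross-entropy $-\frac{1}{n}\sum_i \boldsymbol{y_i}^T\log p(x_i;\theta)$ in order for the identification to be literal. Once that convention is fixed, the proof is a two-line computation made possible entirely by the simplex constraint $\sum_k q_i^{(k)} = 1$, which is what decouples the softmax normalizer from the logit-gap regularizer.
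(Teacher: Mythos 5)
Your proof is correct and follows essentially the same route as the paper's: substitute the decomposition $q_i=(1-\alpha)\boldsymbol{y_i}+\alpha q_i'$, split the smoothed cross-entropy into the standard term plus $\alpha(\boldsymbol{y_i}-q_i')^T\log p(x_i;\theta)$, and use the softmax structure so the log-sum-exp normalizer drops out of the difference term. You are in fact slightly more explicit than the paper about why that normalizer cancels (the coefficients $\boldsymbol{y_i}-q_i'$ sum to zero since both vectors lie in $\Delta_K$), a step the paper leaves implicit.
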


In the next two subsections, we present four different LS methods that are relevant to tackle the adversarial robustness issue. A summary of these methods are presented in Table \ref{Tab:compare_dro}. \\[0cm]
\vspace{-0.3cm}
\begin{table*}[!h]  
  \centering
\begin{tabular}{|c|c|c|c|c|}
\hline
         Paper & Name & $q_i'$ & Induced logit penalty $R_n(\theta)$\\
  \hline
  ~\cite{labelsmoothing} & standard label-smoothing (SLS) & $\frac{1-\byi}{K-1}$ & $\sum_{i=1}^n\left(\frac{K\byi-1}{K-1}\right)^T z_i$\\\hline
  Our paper & adversarial label-smoothing (ALS) & $\byi^{\text{worst}}$, see \eqref{eq:als} & $\frac{1}{n}\sum_{i=1}^n z_i^{(y_i'^{\text{worst}})}-z_i^{(y_i)}$\\\hline
  Our paper & Boltzmann label-smoothing (BLS) & ${q_i'}^{\text{Boltz}}$, see \eqref{eq:boltz} & $\frac{1}{n}\sum_{i=1}^n ({q_i'}-\byi)^Tz_i$\\\hline    
  Our paper & second-best label-smoothing (SBLS) & $\byi^{\text{SB}}$, see \eqref{eq:sbls} & $\frac{1}{n}\sum_{i=1}^n z_i^{(y_i'^{\text{SB}})}-z_i^{(y_i)}$\\\hline    
\end{tabular}
\caption{\textbf{Different of LS methods.} They all derive from the general equation \eqref{eq:general_q}.} \vspace{-0.2cm}
\label{Tab:compare_dro}
\end{table*}

\subsection{Adversarial Label-Smoothing}
\label{subsec:als}
\emph{Adversarial Label-Smoothing} (ALS) arises from the worst possible smooth label $q_i$ for each example $x_i$. To this end, consider the two-player game:
\begin{eqnarray}
\label{eq1}
    \min_{\theta}\max_{\hat{Q}^n \in \mathcal U_\alpha(\hat{P}^n_{X,Y})} \frac{1}{n}\sum_{i=1}^n \operatorname{SmoothCE}(x_i, q_i;\theta),
  \label{eq:grand}
\end{eqnarray}
The inner problem in \eqref{eq1} has an analytic solution (see Appendix \ref{app:als}) given by, $\forall \; i \in [\![1,n]\!]$:
\begin{eqnarray}
  q_i = q_i(\theta) = (1-\alpha) \boldsymbol{y_i} +
  \alpha{\boldsymbol{y^{\text{worst}}_i}},
  \label{eq:als}
\end{eqnarray}
where
 $ y_i^{\text{worst}} \in 
 \text{argmin}_{k=1}^K z^{(k)}(x_i,\theta)
 $
is the index of the smallest component of the logits vector $z_i$ for input $x_i$, and $\boldsymbol{y_i}^{\text{worst}}$ is the one-hot encoding thereof.

\paragraph*{Interpretation Of ALS}
The scalar $\alpha \in [0, 1]$ acts as a smoothing parameter:
if $\alpha=0$, then $q_i(\theta)= \boldsymbol{y_i}$,
and we recover hard labels. If $\alpha = 1$, the adversarial weights $q_i(\theta)$ live
in the sub-simplex spanned by the smallest components of the predictions vector
$p(x_i;\theta)$. For $0 < \alpha < 1$, $q_i(\theta)$ is a proper convex combination of the two
previous cases. Applying Theorem \ref{thm:main_thm}, we have:

\begin{corollary}[ALS enforces logit-squeezing]
  The logit-regularized problem equivalent of the ALS problem \eqref{eq:grand} is given by:
    $\displaystyle{\min_{\theta}L_n(\theta)+ \alpha R_n(\theta)}$,
where
$$
      R_n(\theta) :=
      \frac{1}{n}\sum_{i=1}^n(\boldsymbol{y_i}-\boldsymbol{y^{\text{worst}}_i})^T z_i
      = \frac{1}{n}\sum_{i=1}^n
       z_i^{(y_i)} -  z_i^{(y^{\text{worst}}_i)}.
$$
    \label{cor:mals}
\end{corollary}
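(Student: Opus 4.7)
The plan is to deduce the corollary as a one-line specialization of Theorem~\ref{thm:main_thm}. Given the ALS game \eqref{eq:grand}, formula \eqref{eq:als} identifies the inner maximizer at fixed $\theta$ as $q_i(\theta) = (1-\alpha)\boldsymbol{y_i} + \alpha\boldsymbol{y_i^{\text{worst}}}$; this is precisely the decomposition \eqref{eq:general_q} with the ALS choice $q_i' = \boldsymbol{y_i^{\text{worst}}}$. Thus, at any fixed $\theta$, the outer objective of \eqref{eq:grand} coincides with the objective of the generic LS program \eqref{eq:LS_program} evaluated at $q_i = q_i(\theta)$, and Theorem~\ref{thm:main_thm} applies.

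Before invoking the theorem I would first certify \eqref{eq:als}, since it is the only non-cosmetic step. Writing an admissible $q_i \in \Delta_K$ with $q_i^{(y_i)} \ge 1-\alpha$ as $(1-\alpha)\boldsymbol{y_i} + \alpha q_i'$ for some $q_i' \in \Delta_K$, the smoothed cross-entropy evaluated at $x_i$ splits as $-(1-\alpha)\log p^{(y_i)}(x_i;\theta) - \alpha\sum_k {q_i'}^{(k)}\log p^{(k)}(x_i;\theta)$. Only the second summand depends on $q_i'$; maximizing it over $\Delta_K$ amounts to minimizing the linear functional $q_i' \mapsto \sum_k {q_i'}^{(k)}\log p^{(k)}(x_i;\theta)$, whose extremum on the simplex is attained at a vertex indexed by some $k^* \in \argmin_k \log p^{(k)}(x_i;\theta)$. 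Using $\log p^{(k)}(x_i;\theta) = z^{(k)}(x_i,\theta) - \log\sum_j e^{z^{(j)}(x_i,\theta)}$, this set coincides with $\argmin_k z^{(k)}(x_i,\theta)$, i.e.\ any such $k^*$ is a valid $y_i^{\text{worst}}$, yielding $q_i' = \boldsymbol{y_i^{\text{worst}}}$.

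Applying Theorem~\ref{thm:main_thm} pointwise in $\theta$ with this $q_i'$ (the identity underlying the theorem is an equality in $\theta$ that holds for every choice of $q_i'$, including the $\theta$-dependent ALS one) rewrites \eqref{eq:grand} as $\min_\theta L_n(\theta) + \alpha R_n(\theta)$ with $R_n(\theta) = \frac{1}{n}\sum_{i=1}^n(\boldsymbol{y_i} - \boldsymbol{y_i^{\text{worst}}})^T z_i$; expanding the inner product using the one-hot structure of $\boldsymbol{y_i}$ and $\boldsymbol{y_i^{\text{worst}}}$ immediately gives $\frac{1}{n}\sum_{i=1}^n\bigl(z_i^{(y_i)} - z_i^{(y_i^{\text{worst}})}\bigr)$, the announced logit-squeezing penalty. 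The only subtlety I expect is justifying the collapse of the minimax \eqref{eq:grand} into a single minimization with $q_i = q_i(\theta)$: because the inner maximum is actually attained, the minimax value equals the minimum over $\theta$ of the inner optimal value, so no duality argument is required and the substitution is legitimate.
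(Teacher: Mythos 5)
Your proposal is correct and follows essentially the same route as the paper: the paper also solves the inner maximization in closed form via the change of variables $q_i=(1-\alpha)\boldsymbol{y_i}+\alpha q_i'$ (Lemma~\ref{thm:bumbednash} in Appendix~\ref{app:als}), identifies $q_i'=\boldsymbol{y^{\text{worst}}_i}$ with $y_i^{\text{worst}}\in\argmin_k z^{(k)}(x_i,\theta)$, and then specializes Theorem~\ref{thm:main_thm}. The extra details you supply (that $\argmin_k \log p^{(k)}=\argmin_k z^{(k)}$ by monotonicity of the softmax normalization, and that the minimax collapses to a single minimization because the inner maximum is attained pointwise in $\theta$) are left implicit in the paper but are consistent with its argument.
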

\vspace{-0.5cm} 
For each data point $x_i$ with true label $y_i$, the logit-squeezing penalty term $R_n(\theta)$ forces the model to refrain from making over-confident predictions,
corresponding to large values of $z_i^{(y_i)}-z_i^{({y_i}^{\text{worst}})}$ that can lead to overfitting. This means that every class label receives a positive prediction output probability: $\forall \; k \in [\![K]\!], \; p_i^{(k)} > 0$. The resulting models are less vulnerable to adversarial perturbations on the input $x$.

One can also see ALS as the label analog of
adversarial training \cite{goodfellow2014explaining, kurakin2016adversarial}.
Instead of modyfing the input data $x$, we modify the label data $y$. However, unlike adversarial training, ALS is attack-independent: it does not require to choose a specific attack
method to be trained on.
Moreover, in contrary to adversarial training, ALS does not entail any significant increase in training time over standard training.

\paragraph*{ALS Implementation}
We noted that ALS only consists in redefining a loss, and using the smoothed cross-entropy with an adversarially-modified version of the one-hot encoding of the class labels. It is very simple to
implement, and computationally as efficient as standard training.

See algorithm \ref{alg:als} for an easy implementation of ALS.

\vspace{-0.2cm}
\begin{algorithm}[H]
   \caption{Adversarial Label-Smoothing (ALS) training}
   \label{alg:als}
\begin{algorithmic}
   \STATE {\bfseries Input:} training data $(x_1, y_1), (x_2,y_2), \ldots$;
   a given model; smoothing parameter $\alpha \in [0, 1]$;
   \FOR{each epoch}
   \FOR{each mini-batch $x=x_{1:m}$, $y=y_{1:m}$}
   \STATE \textbf{Smooth labels} $y_{ALS} \leftarrow (q_1,\ldots,q_m)$ via \eqref{eq:als}
   \STATE \textbf{Get predictions} $y_{pred} \leftarrow model(x)$
   \STATE \textbf{Compute loss} $\leftarrow \text{SmoothCE}(y_{pred}, y_{ALS})$
   \STATE \textbf{Update} model parameters $\theta$ via back-prop
   \ENDFOR
   \ENDFOR
\end{algorithmic}
\end{algorithm}
\vspace{-0.5cm}

\subsection{Other Label-Smoothing Methods}
\label{subsec:other_methods}

\paragraph*{Standard Label-Smoothing} \emph{Standard Label-Smoothing} (SLS) is
the method developed in \cite{labelsmoothing}. It corresponds to uniformly re-assigning the mass $\alpha$ removed from
the real class over the other classes. That is,
the term $q_i' \in \Delta_K$ in \eqref{eq:general_q} is given by
\begin{eqnarray}
q_i' = \sum_{k=1, k \neq y_i}^K \frac{1}{K-1}\boldsymbol{y^{(k)}} = \frac{1}{K-1}(1-\boldsymbol{y_i}).
\label{eq:std}
\end{eqnarray}
In this case,
$R_n(\theta)=\sum_{i=1}^n\left(\frac{K\boldsymbol{y_i}-1}{K-1}\right)^T z_i$. If $K=2$, with a perfect model (i.e. $z_i^{(k)} \ge 0$ if $y_i^{(k)} = 1$ and $z_i^{(k)} < 0$ else),
we have $R_n(\theta)=\sum_{i=1}^n \|z_i\|_1$, an $\ell_1$-norm penalty on the logits.

\paragraph*{Boltzmann Label-Smoothing} ALS puts weights on only two classes: the true class label (due to the constraint of the model) and the class label which minimizes the logit vector.
It thus gives "two-hot" labels rather than "smoothed"
labels. Replacing \emph{hard-min} with a \emph{soft-min} in \eqref{eq:als} leads
to the so-called \emph{Boltzmann Label-Smoothing} (BLS), defined by setting the
term $q_i' \in \Delta_K$ in \eqref{eq:general_q} to:
\begin{eqnarray}
q_i' = \sum_{k=1, k \neq y_i}^K \text{Boltz}^{(k)}_T(x_i,\theta),
\label{eq:boltz}
\end{eqnarray}
where
$\text{Boltz}_T^{(k)}(x_i,\theta) = \frac{\exp(-z^{(j)}(x_i;\theta)/T)}{\sum_{k'=1, k'\neq y_i}^K
  \exp(-z^{(k')}(x_i;\theta)/T)}$
is the Boltzmann distribution with energy levels $z^{(k)}(x_i;\theta)$ at temperature $T \in [0, \infty]$. It interpolates between ALS (corresponding to $T=0$), and SLS (corresponding to $T=\infty$).

\paragraph*{Second-Best Label-Smoothing} SLS, ALS and BLS give positive prediction outputs for every label because we add weight to either every label, or the "worst" wrong label. However, in the problem we consider, it does not matter if we fool the classifier by making it predict the "worst" or the "closest" wrong class. Therefore, a completely different approach consists in concentrating our effort and add all the available mass $\alpha$ only on the "closest" class label.
This leads to \emph{Second Best Label-Smoothing} (SBLS) defined by:
\begin{eqnarray}
q_i' = {\boldsymbol{y^{\text{SB}}_i}} := \argmax_{k=1,\;k\neq {y_i}}^K p^{(k)}(x_i; \theta).
\label{eq:sbls}
\end{eqnarray}
The problem can be rewritten as:

$\displaystyle{\min_{\theta} L_n(\theta) + \alpha \, (-1) \frac{1}{n}\sum_{i=1}^n \max_{k \neq y_i} (z_i^{(k)}) - z_i^{(y_i)}}$.

Note the correspondence with the opposite of the Hinge loss in the second term: this penalty tends to make the margin between the true class prediction and the closest wrong class prediction smaller.

Training with each of these Label-Smoothing methods (SLS, BLS, SBLS) can be implemented via Alg. \ref{alg:als}, using Eqn. \ref{eq:std}, \ref{eq:boltz} or \ref{eq:sbls} respectively, in line 5 of the the Alg. \ref{alg:als} instead of Eqn. \ref{eq:als}. Note also that any other choice of $q'_i$ can lead to different variations of LS, and can be easily implemented the same way.

Here,we finally obtain four different LS methods: ALS, BLS, SBLS and SLS. The effects of ALS in particular and LS as a general method are investigated in Section \ref{sec:toy} and \ref{sec:understanding}, and each of the four methods will be tested as defense methods in Section \ref{exp}. We argue in the next sections that LS as a general method can improve robustness. The choice of the specific method should afterwards be guided by the type of data or problem one is facing, but ALS can be chosen as a default method to improve robustness in the most general cases.

\section{UNDERSTANDING LABEL SMOOTHING}
\subsection{Fading Gaussian Example}
\label{sec:toy}
\vspace{-0.2cm} 
We now explore a simple example illustrating some of the implications of using LS, with regards to standard accuracy and rebustness to adversarial attacks. Let us consider the following problem inspired by \cite{tsipras2018robustness}:
$Y \sim \mathcal{U}(\{-1, 1\})$, and $X_1,...,X_d | Y=y \;  \overset{\text{iid}}{\sim} \mathcal{N}(y \mu_i, \sigma_i^2)$ so that $d$ is the number of features. The possible classifiers studied here are linear classifiers: $f_w: x \mapsto \sign(w^T x)$, with parameters $w \in \mathbb R^d$.

We want to compare the performances (both standard and adversarial and depending only on $w$) of the Bayes classifier and the ALS classifier the above problem.

\begin{figure*}
\vspace{-0.2cm}
\centering
\begin{subfigure}[b]{0.33\textwidth}
  \centering
  \includegraphics[width=1\linewidth]{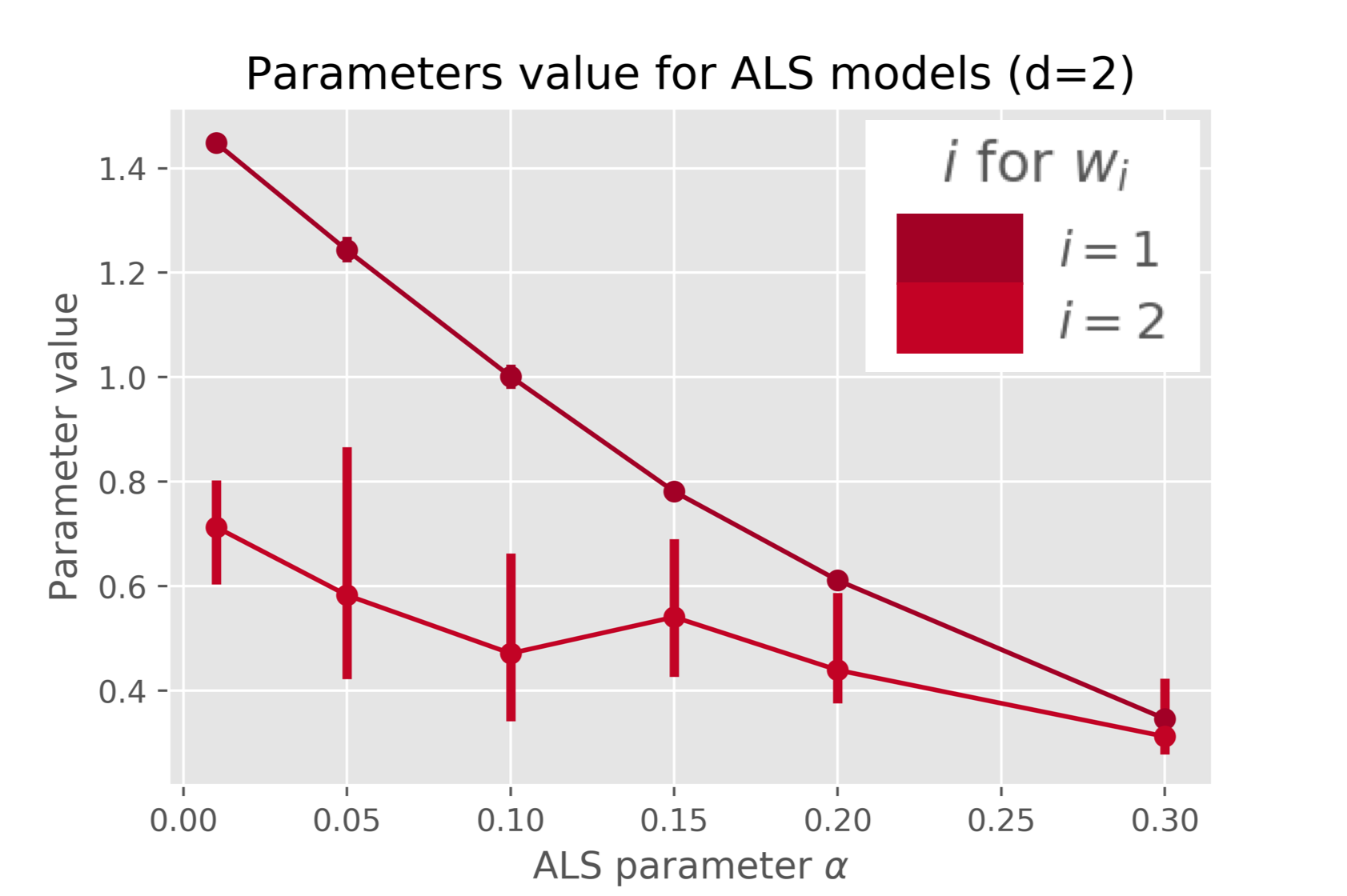}
  \vspace{-0.5cm}
  \caption{d=2}
  \label{fig:fading_gauss_w2}
\end{subfigure}%
\hfill
\begin{subfigure}[b]{0.33\textwidth}
  \centering
  \includegraphics[width=1\linewidth]{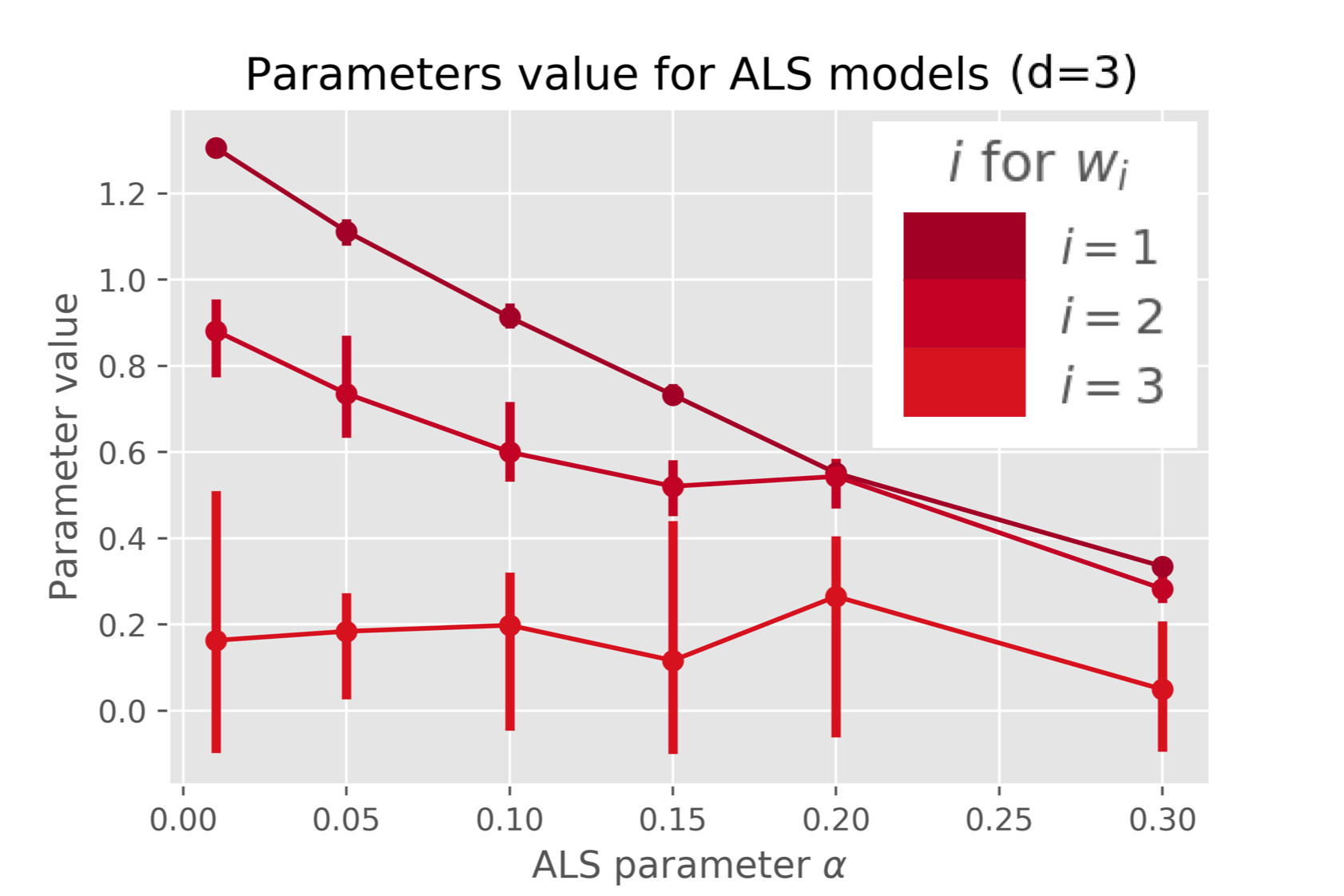}
  \vspace{-0.5cm}
  \caption{d=3}
  \label{fig:fading_gauss_w3}
\end{subfigure}
\hfill
\begin{subfigure}[b]{0.33\textwidth}
  \centering
  \includegraphics[width=1\linewidth]{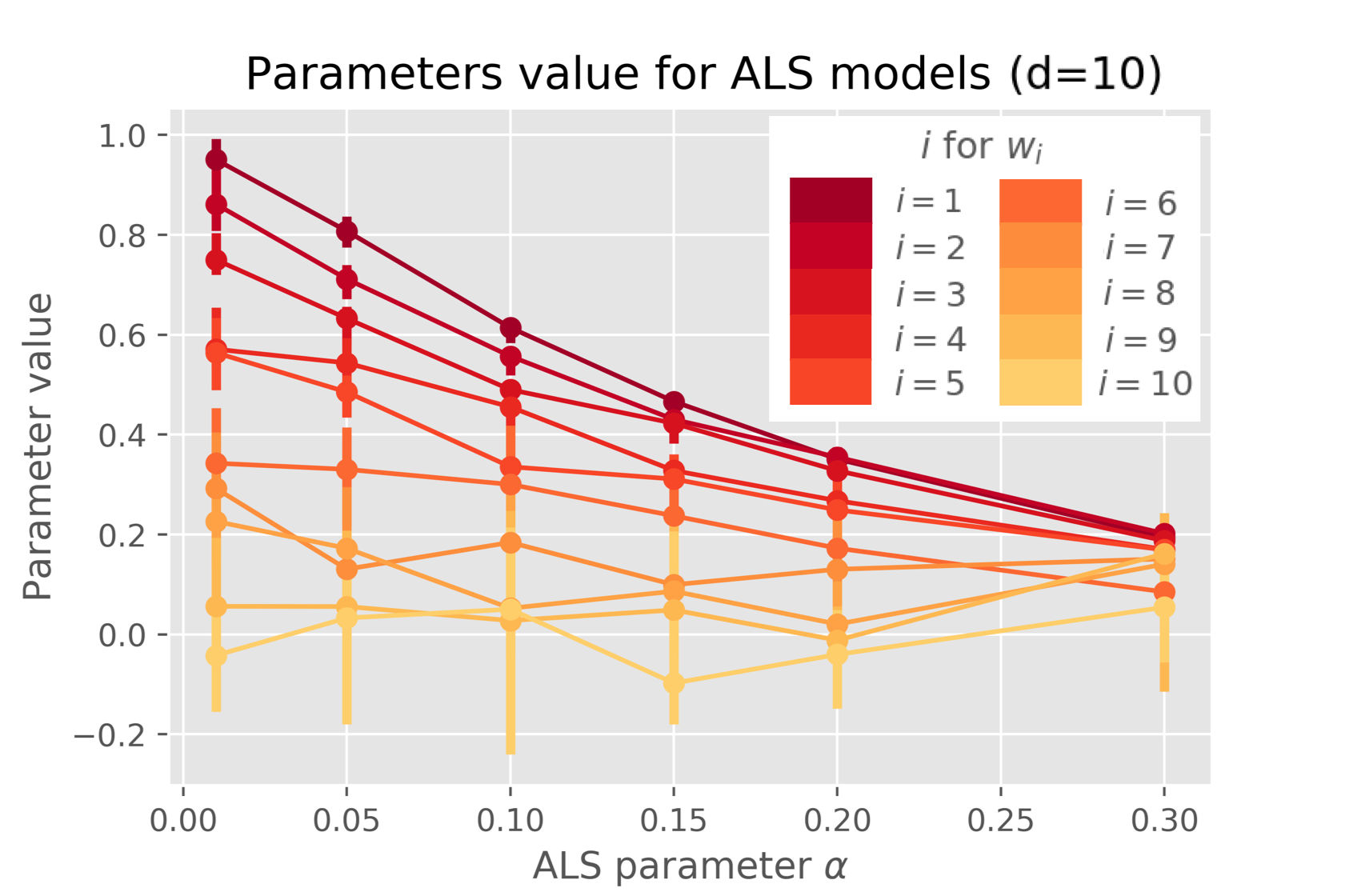}
  \vspace{-0.5cm}
  \caption{d=10}
  \label{fig:fading_gauss_w10}
\end{subfigure}%
\caption{Parameters $w$ in the fading Gaussian experiments}
\label{fig:fading_gauss_w}
\vspace{-0.3cm}
\end{figure*}

\paragraph{Standard accuracy} One computes the adversarial robustness accuracy of the linear model $f_w$ as
\begin{eqnarray}
\begin{split}
&\text{acc}(f_w):=\mathbb{P}( f_w(x) = y ) = \mathbb{P}( yw^T x > 0 )\\
&= \mathbb{P}( \mathcal{N}( w^T\mu ; \sum_j w_j^2 \sigma_j^2) > 0)
=\Psi\left(\frac{w^T\mu}{\sqrt{\sum_j w_j^2 \sigma_j^2}}\right),
\end{split}
\end{eqnarray} where $\Psi$ is the CDF of the the standard Gaussian.

\paragraph{Adversarial accuracy}
For bounded $\ell_\infty$-norm attacks, one computes the adversarial robustness accuracy of the linear model $f_w$ as 
\begin{eqnarray}
\begin{split}
&\text{acc}_\varepsilon(f_w) := \mathbb{P}(f_w(x')=y\;\forall x' \in \mathbb R^d,\;\|x'-x\|_\infty \le \varepsilon)\\
&\quad=\mathbb{P}\left(\min_{\|\Delta x\|_\infty \le \varepsilon}yw^T(x + \Delta x) > 0\right)\\
&\quad= \mathbb{P}( yw^T x - \varepsilon ||w||_1 > 0)
= \Psi \left( \frac{ w^T \mu - \varepsilon ||w||_1}{\sqrt{\sum_{j=1}^d\sigma_j^2w_j^2}} \right),
\end{split}
\end{eqnarray}
where $\varepsilon$ is the strength of the attack.
By the way, the optimal adversarial perturbation is given by 
$x^{adv} = x - \varepsilon y\sign(w)$, where $\sign(w) := (\sign(w_1),\ldots,\sign(w_d))$. When $\varepsilon = 0$, we recover the standard accuracy formula.

We have the following Lemma.
\begin{lemma}
\label{lemma:fading_gaussian}
In the special case where the covariance matrix is diagonal $\Sigma=\diag(\sigma_1^2,\ldots,\sigma_d^2)$, the most robust linear classifier has weights $w \in \mathbb R^d$ given by
\begin{eqnarray}
w_j \propto \sigma_j^{-2}\sign(\mu_j)(|\mu_j| - \varepsilon)_+,
\end{eqnarray}
where $(a)_+ := \max(0, a)$, and the the use of the proportionality symbol "$\propto$" indicates a hidden constant independent of the feature index $j$.

Moreover, the optimal adversarial accuracy amongst all linear classifiers is given by
\begin{eqnarray}
\max_{w \in \mathbb R^d} \text{acc}_\varepsilon(f_w) = \Psi(\sqrt{\Delta(\varepsilon)}),
\end{eqnarray}
where
$\Delta(\varepsilon)=\sum_{j=1}^d\sigma_j^{-2} \left(\left(|\mu_j| - \varepsilon \right)_+ \right)^2$.

\end{lemma}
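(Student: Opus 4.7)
The plan is to exploit the scale invariance and monotonicity structure of the adversarial accuracy formula, reducing the optimization to a Cauchy--Schwarz problem.

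First, I would observe that $\text{acc}_\varepsilon(f_w) = \Psi\!\left(\frac{w^T\mu - \varepsilon\|w\|_1}{\sqrt{\sum_j \sigma_j^2 w_j^2}}\right)$, and since $\Psi$ is strictly increasing, maximizing $\text{acc}_\varepsilon(f_w)$ over $w \in \mathbb{R}^d$ is equivalent to maximizing the argument $F(w) := (w^T\mu - \varepsilon\|w\|_1)/\sqrt{\sum_j \sigma_j^2 w_j^2}$. Writing $w_j = s_j t_j$ with $s_j \in \{-1,+1\}$ and $t_j = |w_j| \ge 0$, the denominator and the $\ell_1$ term depend only on the $t_j$'s, while the numerator contains $\sum_j s_j t_j \mu_j$. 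Hence the optimal sign choice is $s_j = \sign(\mu_j)$ (when $\mu_j = 0$ the coordinate only hurts the denominator, so $t_j=0$ is optimal). This reduces the problem to maximizing over $t_j \ge 0$:
\[
F(t) = \frac{\sum_j (|\mu_j|-\varepsilon)\, t_j}{\sqrt{\sum_j \sigma_j^2 t_j^2}}.
\]

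Next, I would notice that any index $j$ with $|\mu_j| \le \varepsilon$ should have $t_j = 0$: a positive $t_j$ would weakly decrease the numerator and strictly increase the denominator. So only the \emph{active} indices $\mathcal{J} = \{j : |\mu_j| > \varepsilon\}$ contribute, with $a_j := |\mu_j| - \varepsilon > 0$. Setting $u_j := \sigma_j t_j$ for $j \in \mathcal{J}$, the objective becomes
\[
F = \frac{\sum_{j \in \mathcal{J}} (a_j/\sigma_j)\, u_j}{\sqrt{\sum_{j \in \mathcal{J}} u_j^2}},
\]
which by Cauchy--Schwarz is at most $\sqrt{\sum_{j \in \mathcal{J}} a_j^2/\sigma_j^2} = \sqrt{\Delta(\varepsilon)}$, with equality iff $u_j \propto a_j/\sigma_j$, i.e.\ $t_j \propto a_j/\sigma_j^2$. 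Combining with the sign choice gives $w_j \propto \sigma_j^{-2}\sign(\mu_j)(|\mu_j|-\varepsilon)_+$ and $\max_w \text{acc}_\varepsilon(f_w) = \Psi(\sqrt{\Delta(\varepsilon)})$.

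I do not expect serious obstacles: the main subtlety is handling the non-smooth $\|w\|_1$ term, which I sidestep by the sign reduction argument, and the non-active coordinates, which are eliminated by the monotonicity argument above. Once those are dealt with, the remaining step is a direct Cauchy--Schwarz calculation. A minor point worth double-checking is the degenerate case $\mathcal{J} = \emptyset$ (i.e.\ $\varepsilon \ge \max_j |\mu_j|$), where $\Delta(\varepsilon)=0$ and the maximum adversarial accuracy is $\Psi(0)=1/2$, consistent with the claim.
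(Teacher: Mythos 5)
Your proof is correct and reaches the stated maximizer and optimal value, but by a genuinely different route than the paper. You work directly with the objective $F(w)=(w^T\mu-\varepsilon\|w\|_1)/\sqrt{\sum_j\sigma_j^2w_j^2}$: after splitting $w_j=s_jt_j$, choosing the signs $s_j=\sign(\mu_j)$, dropping the coordinates with $|\mu_j|\le\varepsilon$, and rescaling $u_j=\sigma_jt_j$, everything reduces to one Cauchy--Schwarz inequality whose equality case delivers both $w_j\propto\sigma_j^{-2}\sign(\mu_j)(|\mu_j|-\varepsilon)_+$ and the value $\Psi(\sqrt{\Delta(\varepsilon)})$. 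The paper instead dualizes the $\ell_1$ term as $\varepsilon\|w\|_1=\max_{\|z\|_\infty\le\varepsilon}z^Tw$, invokes Sion's minimax theorem to swap min and max, and reduces the problem to the weighted projection $\min_{\|z\|_\infty\le\varepsilon}\|z-\mu\|_{\Sigma^{-1}}$, which separates coordinate-wise when $\Sigma$ is diagonal; the optimal $w$ is then recovered from the dual solution via $w\propto\Sigma^{-1}(\mu-z^*)$. Your argument is more elementary and self-contained (no duality or minimax theorem, and the maximizer falls out of the equality case), whereas the paper's dual formulation is what lets it also say something about non-diagonal $\Sigma$ (the general expression $-\min_{\|z\|_\infty\le\varepsilon}\|z-\mu\|_{\Sigma^{-1}}$ and its eigenvalue bounds); your sign-splitting step uses diagonality in an essential way, which is all the lemma requires. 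Two minor points: the elimination of inactive coordinates as stated (smaller numerator, larger denominator) is only conclusive when the numerator is nonnegative, but this is harmless since your Cauchy--Schwarz bound applied to the active part already covers all $t\ge 0$; and in the fully degenerate case $\varepsilon>\max_j|\mu_j|$ the supremum of $F$ over $w\ne 0$ is in fact strictly negative (by $0$-homogeneity it is attained on the unit sphere), so $\Psi(0)=1/2$ is only recovered by admitting the degenerate $w=0$ --- the same implicit relaxation the paper makes with its constraint $\|w\|_\Sigma\le 1$, and immaterial to the lemma's intended regime.
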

Thus the best solution for $w$ in terms of adversarial accuracy depends on the feature-wise signal-to-noise ratio (SNR) $\frac{|\mu_j|}{\sigma_j^2}$: if this ratio is high, then the $j$th feature is quite determinant of $y$ because it is both far enough (high mean) from the wrong class and reliable enough (small standard deviation).
\paragraph{Bayes classifier} One computes the posterior probability of the positive class label given a sample $x$ as $\mathbb{P}(Y=1 | x) = (1+\text{e}^{ -2 \sum \frac{\mu_i}{\sigma_j^2} x_j})^{-1}$, and so $\mathbb{P}(Y=1 | x) > 1/2 \Leftrightarrow \sum_{j=1}^d \frac{\mu_j}{\sigma_j^2} x_j > 0$. Thus the Bayes-optimal classifier is linear with parameters given by $w_j = \mu_j / \sigma_j^2,\;\forall j$. Thus in particular, if $\mu_j / \sigma_j^2=\text{ constant} \;\forall j$ (as in \cite{tsipras2018robustness}), then the Bayes optimal classifier corresponds to the linear model with parameters $w = (1,\ldots,1)$.

\paragraph{ALS classifier} The ALS problem reduced to finding $w$ minimizing the expected value of the smooth cross-entropy loss defined in \ref{contrib}. Let us note $p_1$ and $p_{-1}$ the predicted probabilities of each class, so $p_1 = 1/(1+ \text{e}^{-w^Tx})$ and $p_{-1} = 1-p_1$. Depending on the values of $x$ and $y$, the loss can take different forms:
\begin{itemize}
\vspace{-0.3cm}
    \item $l_1(x;w) = \alpha \ln{(p_{-1})} + (1-\alpha)\ln{(p_{1})}$ if $y=1$ and $w^Tx > 0$ \vspace{-0.1cm}
    \item $l_2(x;w) = \ln{(p_1)}$ if $y=1$ and $w^Tx < 0$ \vspace{-0.1cm}
    \item $l_3(x;w) = \ln{(p_{-1})}$ if $y=-1$ and $w^Tx > 0$ \vspace{-0.1cm}
    \item $l_4(x;w) = (1-\alpha) \ln{(p_{-1})} + \alpha\ln{(p_{1})}$ if $y=-1$ and $w^Tx < 0$, \text{thus}
\end{itemize}
\vspace{-0.1cm}
\begin{equation*}
\label{eq:ALS_fading_gaussian}
\begin{split}
&\mathbb{E}_{X,Y}(loss_w(X,Y)) = \frac{1}{2} \left[ \int_{w^T x>0} l_1(x;w) f_{Y=1}(x) \, \text{d}x \right. + \\
& \int_{w^T x<0} l_2(x;w) f_{Y=1}(x) \, \text{d}x + \int_{w^T x>0} l_3(x;w) f_{Y=-1}(x) \, \text{d}x \\
& +\left. \int_{w^T x<0} l_4(x;w) f_{Y=-1}(x) \, \text{d}x \right]
\end{split}
\end{equation*} ~\\[-0.2cm]
where $f_{Y=1}$ (resp. $f_{Y=-1}$) is the density of $X|Y=1$ (resp. $X|Y=-1$)

The derivative of equation \ref{eq:ALS_fading_gaussian} is difficult to express in close form. The values for $w$ obtained when running the experiments (see below) can be derived from Figure \ref{fig:fading_gauss_w}. We easily check that the features are "correctly ordered" in terms of parameter $w$ : the more important the feature is, higher the $w$ is, which leads to better adversarial accuracy.

\begin{figure*}[!t]
\centering
\begin{subfigure}[b]{0.49\textwidth}
  \centering
  \includegraphics[width=0.98\linewidth]{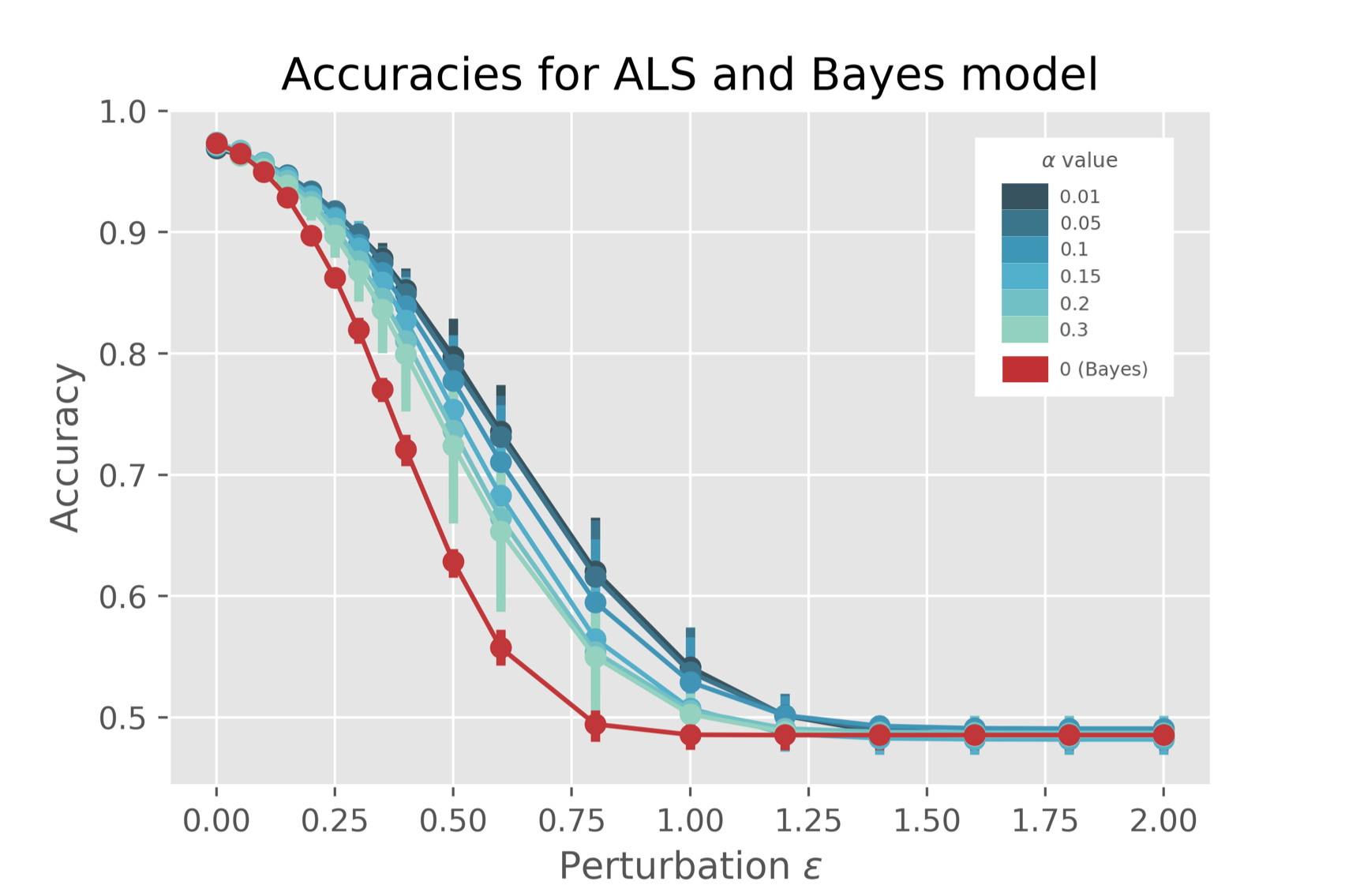}
  \caption{\textbf{Results:} Fading gaussian experiment for d=10 features. ALS accuracy is better especially for small $\alpha$}
  \label{fig:fading_gauss_10}
\end{subfigure}%
\hfill
\begin{subfigure}[b]{0.49\textwidth}
  \centering
  \includegraphics[width=0.98\linewidth]{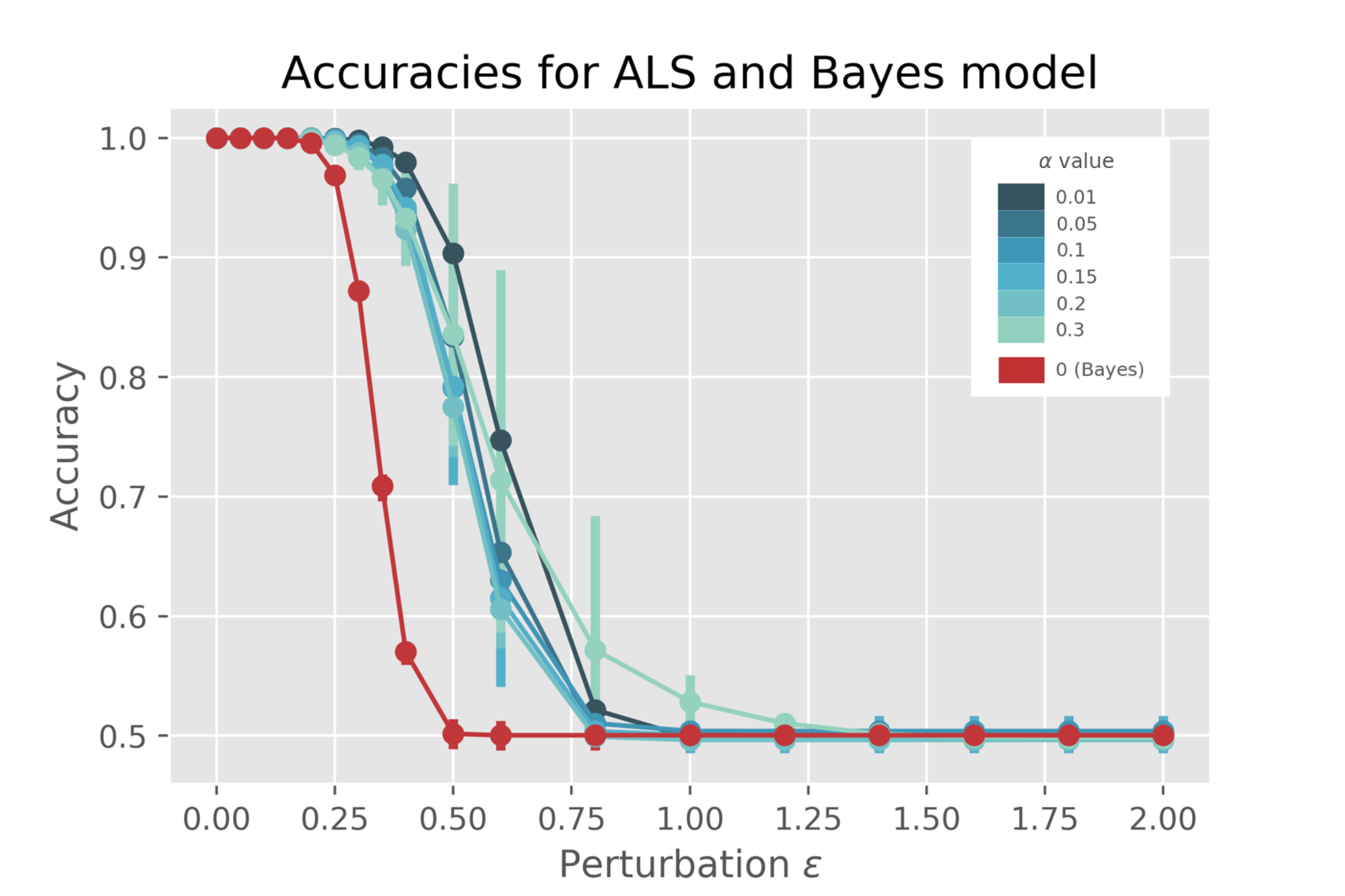}
  \caption{\textbf{Results:} Fading gaussian exp. for d=100 features. ALS accuracy is also better especially for small $\alpha$}
  \label{fig:fading_gauss_100}
\end{subfigure}
\caption{Fading gaussian experiment}
\label{fig:fading_gauss_results}
\vspace{-0.3cm}
\end{figure*}

\paragraph{Empirical illustration} We ran some experiments choosing $\forall \, i \in [\![1, d]\!], \sigma_i = 1 - (i-1)/d$ and so $\mu_i = \sigma_i^2$.

Figure \ref{fig:fading_gauss_results} shows the standard and adversarial accuracies for the Bayes classifier as well as ALS classifiers for different values of $\alpha$, as a function of the perturbation $\varepsilon$. We see that even for $d$ as small as $d=10$, the ALS classifiers do better (especially for small values of $\alpha$) than the Bayes classifier under the adversarial regime, which is consistent with the $w$ values obtained in Fig. \ref{fig:fading_gauss_w} and Lemma \ref{lemma:fading_gaussian}. Importantly, the standard accuracy ($\varepsilon = 0$) is not so different between Bayes and ALS classifiers, suggesting that ALS enables better adversarial generalization without damaging the "natural" one.

Figure \ref{plot:kde} shows the kernel density plot (KDE) of the dataset when $d=2$. The black line is the main direction of the density, while the two red lines are different decision boundaries for two different choices of $w$. The plain red line is the decision boundary corresponding to the Bayes case, so $w = (1,1)$, while the dashed red line corresponds here to the case $w = (4,1)$ and is orthogonal to the black line.

\begin{figure}[H]
  \begin{center}
   \vspace{-0.3cm}
    \includegraphics[width=0.42\textwidth]{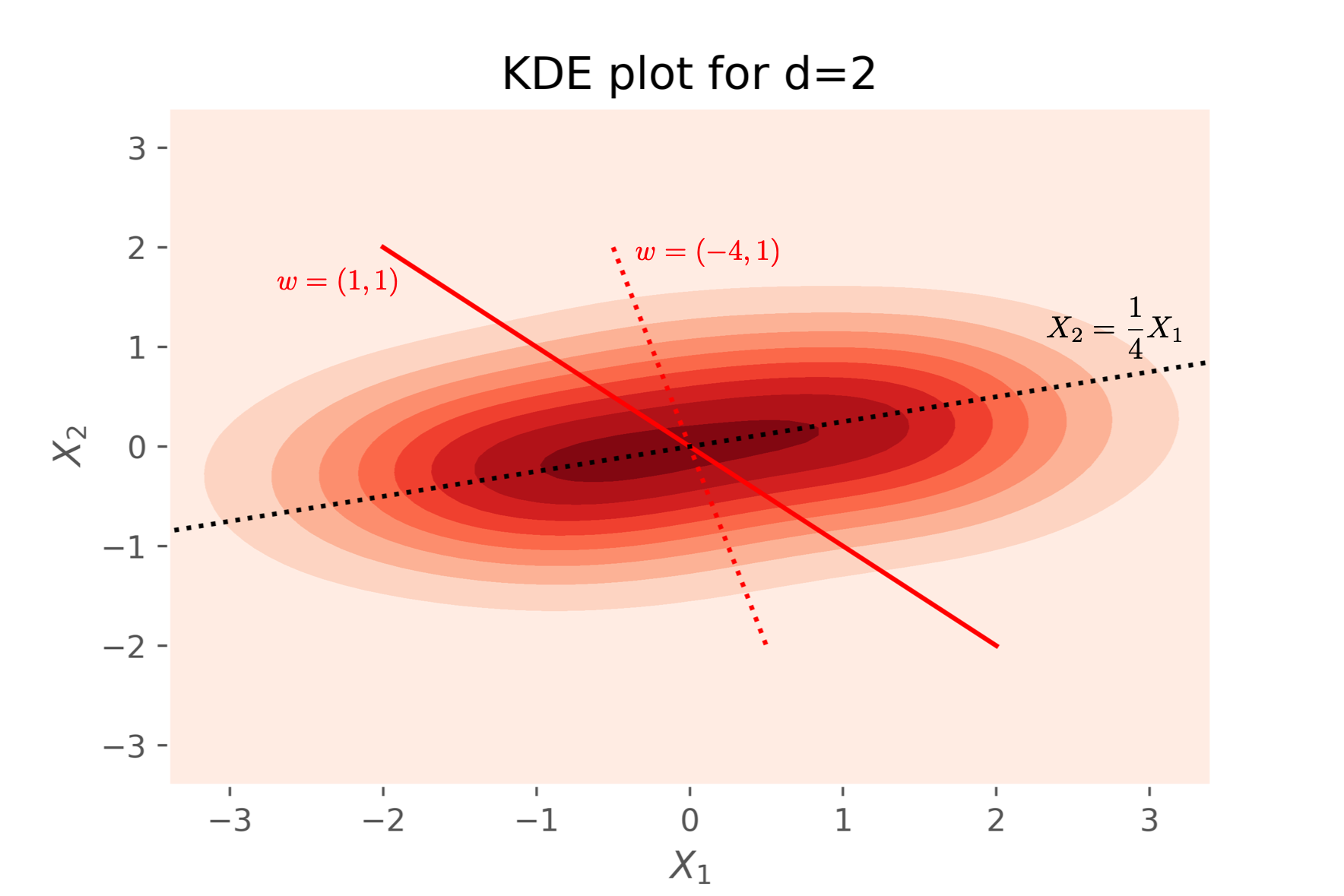}
  \end{center}
  \vspace{-0.5cm}
  \caption{KDE plot: adversarial acc. dashed decision boundary is higher than plain's one.}
  \label{plot:kde}
  \vspace{-0.4cm}
\end{figure}

It is interesting to realize that potential adversarial examples are points that lay around the decision boundary (of course, at a distance depending on the choice of the perturbation $\varepsilon$). Thus, it is more efficient, under the adversarial regime, to have a decision boundary that crosses regions of low density, or spends the shortest possible time in regions of high density. This is exactly what does the dashed red decision boundary, compared to the plain red/Bayes decision boundary that does not take into account this phenomenon. This is equivalent to say that in order to gain better robustness, a classifier must use the information provided by the variability of the features, a fact which was already apparent in the formula for $w_j$ derived in the adversarial accuracy paragraph.

\subsection{A Closer Look At Label-Smoothing}
\label{sec:understanding}

\begin{figure*}[t]
\centering
\begin{subfigure}[b]{0.49\textwidth}
  \centering
  \includegraphics[width=0.95\linewidth]{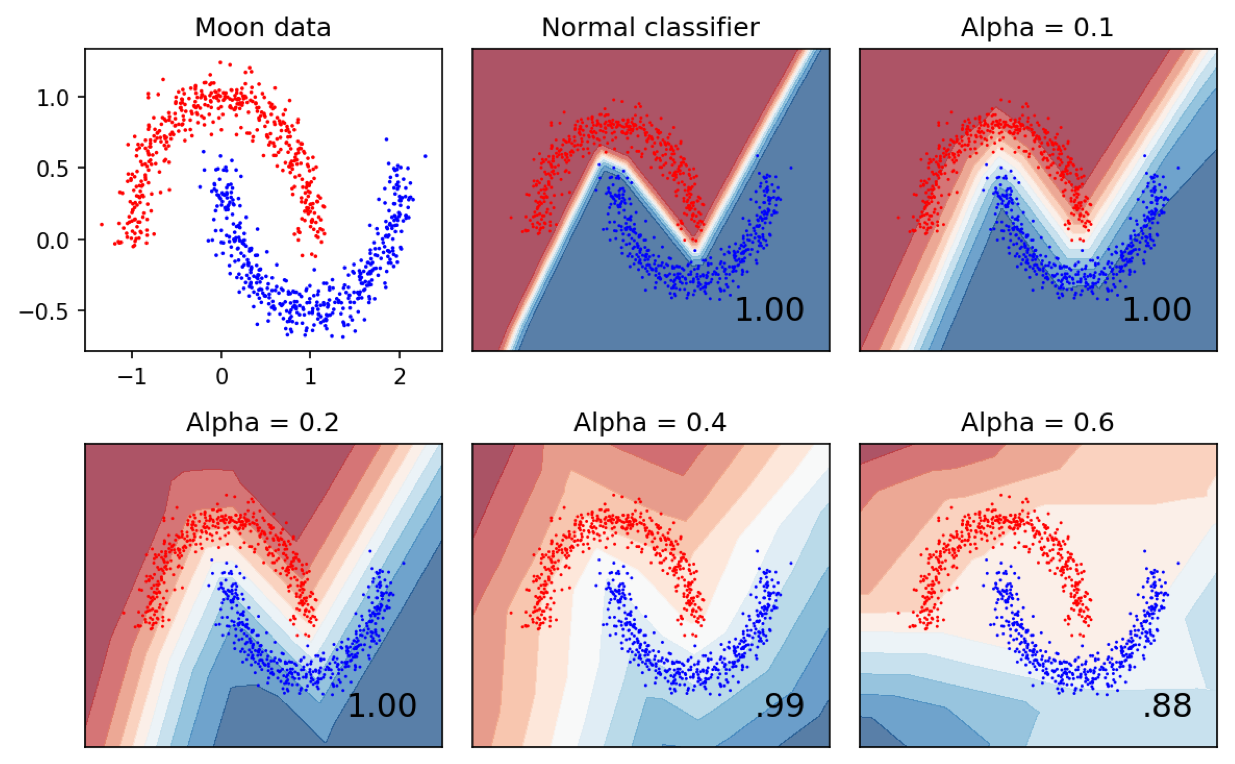}
  \caption{\textbf{Regularization effect:} logit squeezing using ALS (different $\alpha$) and a MLP classifier. Darker is more confidence.}
  \label{fig:regularization}
\end{subfigure}%
\hfill
\begin{subfigure}[b]{0.49\textwidth}
  \centering
  \includegraphics[width=0.95\linewidth]{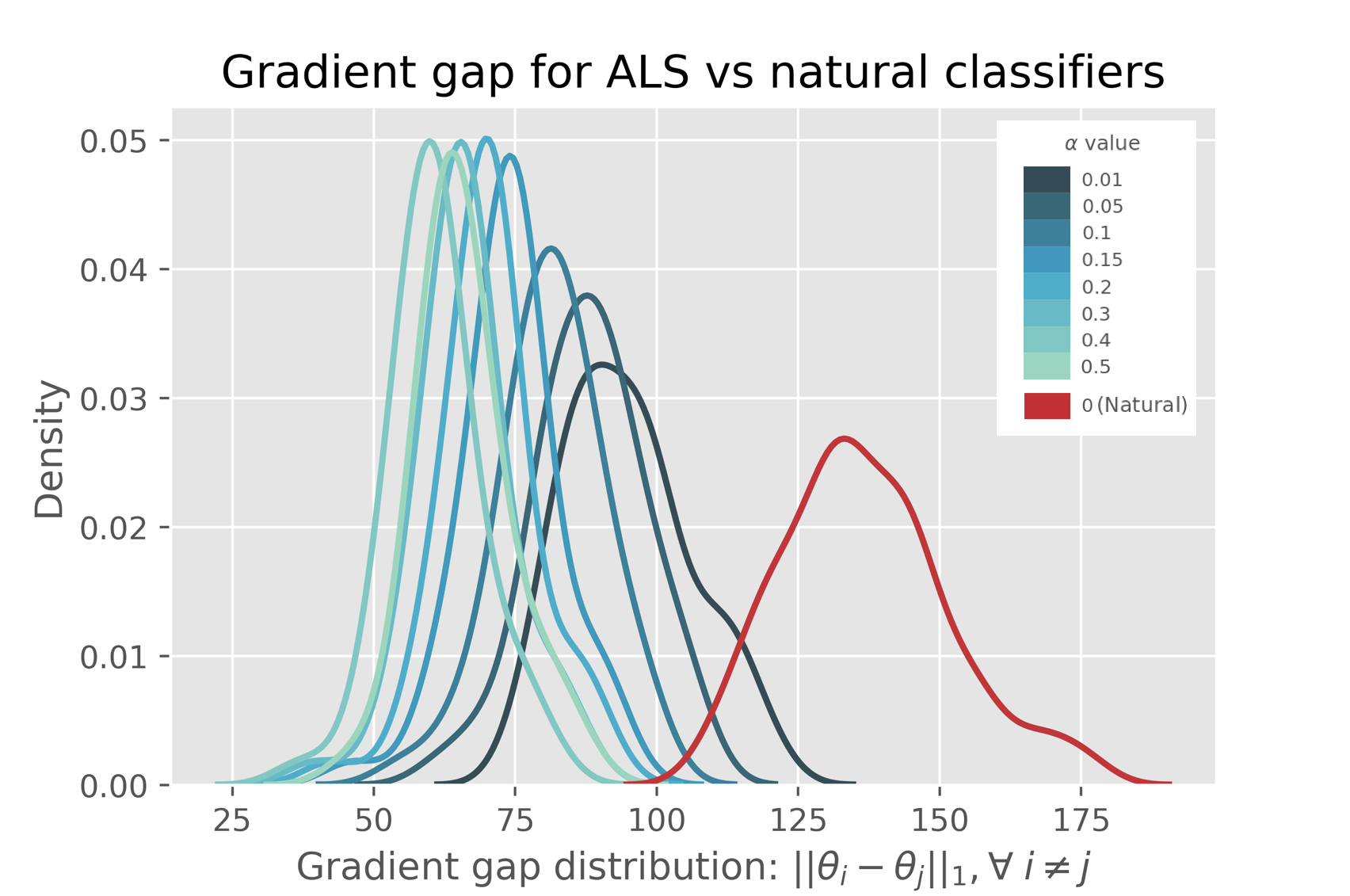}
  \caption{\textbf{Gradient gap reduction} for a one-layer linear classifier on MNIST.}
  \label{fig:gradient_gap}
\end{subfigure}
\caption{Effects of LS}
\label{fig:triangular}
\vspace{-0.3cm}
\end{figure*}

\subsubsection{Logit-squeezing and gradient-based methods}
\label{subsec:logit}
Applying label-smoothing (LS) generates a logit-squeezing effect (see Theorem \ref{thm:main_thm}) which tends to prevent the model from being over-confident in its predictions. This effect was investigated in \cite{pereyra2017regularizing} and is illustrated in Fig. \ref{fig:regularization}, where we plot the prediction values for different MLP classifiers trained on the moon dataset.

In addition to this impact on the logits and predictions, LS also have an effect on the logits' gradients (with respect to $x$, see Fig. \ref{fig:gradient_gap} where we plot these gradients for one-layer linear classifiers -ALS and natural- trained on MNIST) which can help explain why ALS trained models are more robust to adversarial attacks. As described in \cite{shafahi2018label}, using a linear approximation, an attack is successful if

$p_y(x, \theta) + \delta^T \nabla_x p_y(x, \theta) \leq p_k(x, \theta) + \delta^T \nabla_x p_k(x, \theta)$

for any $k \neq y$, where $\delta$ is the attack perturbation. With an FGSM-like attack of strength $\epsilon$, it thus works if

$ \epsilon \geq \frac{1}{|| \nabla_x z_y(x, \theta) - \nabla_x z_j(x, \theta) ||_1}$.

By reducing this gradient gap, LS provides more robust models at least against gradient-based attack methods.

\subsubsection{Why does LS help adversarial robustness?}
\label{subsec:intuition}
Pointwise, the SmoothCE loss induces different costs compared to the traditional CE loss. As discussed in Sec. \ref{subsec:logit}, over-confidently classified points are more penalized. Likewise, very badly classified points are also more penalized. The model is thus forced to put the decision boundary in a region with few data points (see Section \ref{sec:toy}). If not, either the penalty term $R_n(\theta)$ or the general term $L_n(\theta)$, defined in Sec. \ref{subsec:als}, will be too high.
The underlying geometry of the dataset is thus better addressed compared to a traditional training: boundaries are closer to "the middle", i.e the margin between two classes is bigger (similar to how SVM operates), leading to increased robustness. Traditional CE loss, however, induces a direct power relationship: the boundary between two classes is pushed close to the smallest one.

\section{EXPERIMENTS}
\label{exp}


\begin{figure*}[!t]
\vspace{-0.2cm}
\centering
  \begin{subfigure}{0.47\linewidth}
    \includegraphics[width=1\linewidth]{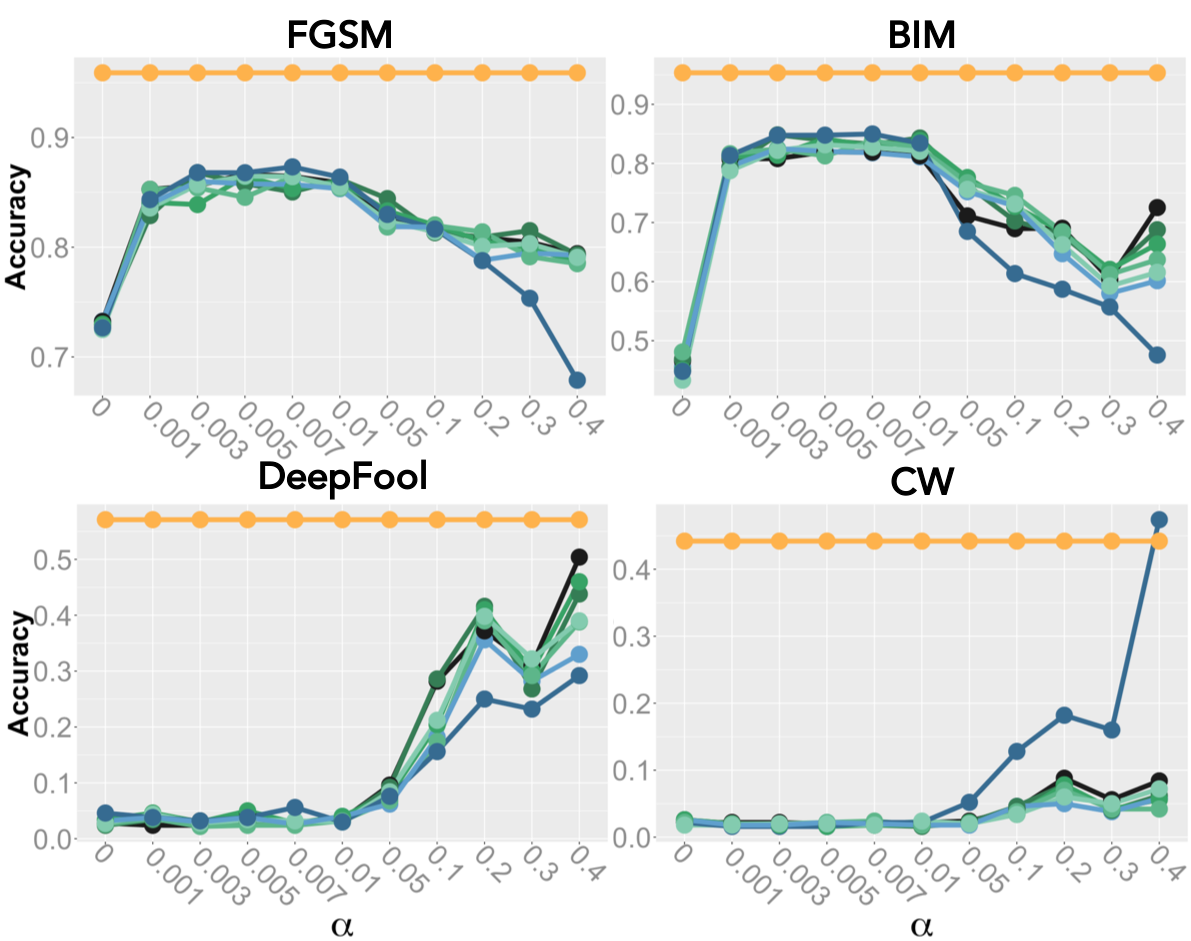}
    \caption{\textbf{MNIST LeNet.} $\epsilon = 0.15$ for FGSM and BIM}
    \label{fig:mnist}
  \end{subfigure} \hfill
  \begin{subfigure}{0.47\linewidth}
    \includegraphics[width=1\linewidth]{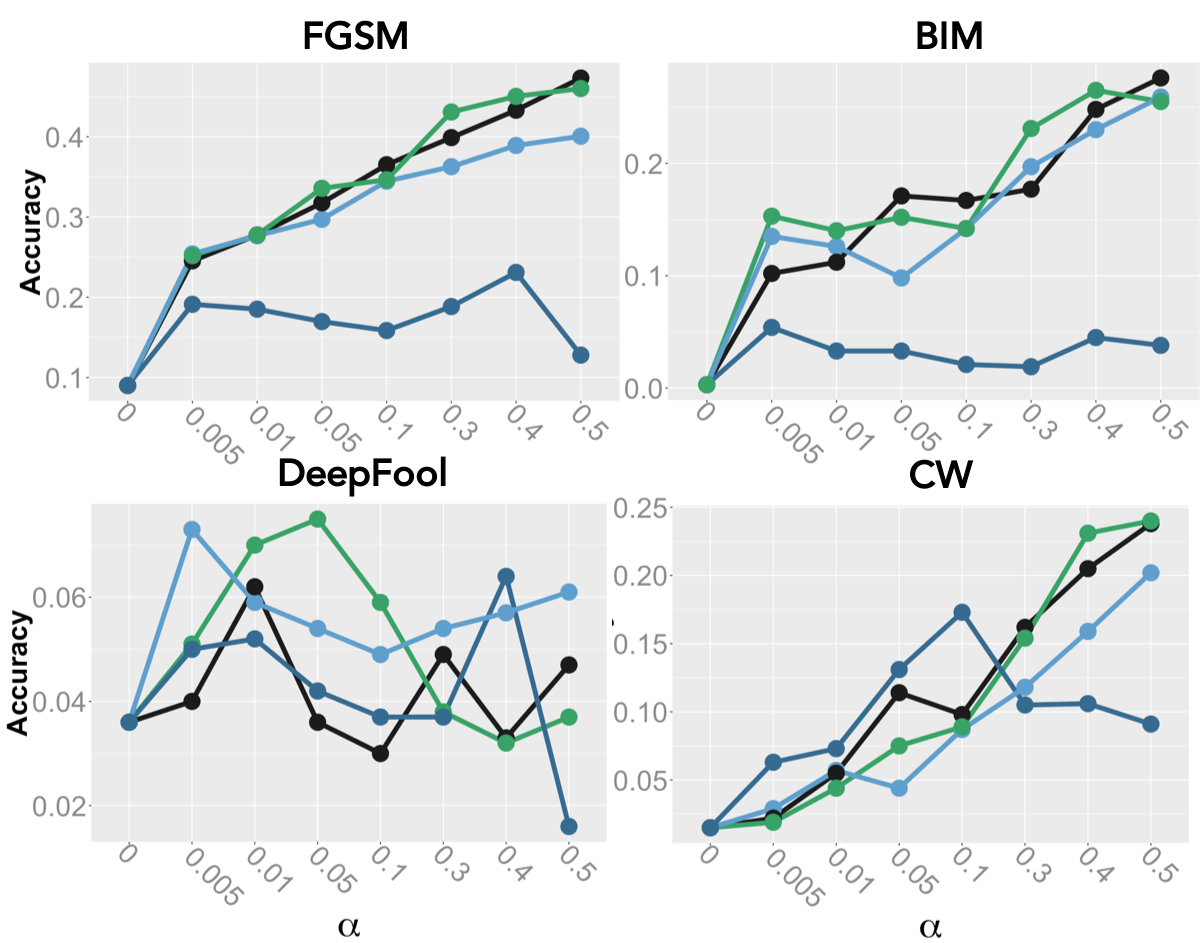}
    \caption{\textbf{CIFAR10 ResNet}. $\epsilon = 0.05$ for FGSM and BIM}
    \label{fig:cifar}
  \end{subfigure}
  
\bigskip
\vspace{-0.3cm}

  \begin{subfigure}{0.47\linewidth}
    \includegraphics[width=1\linewidth]{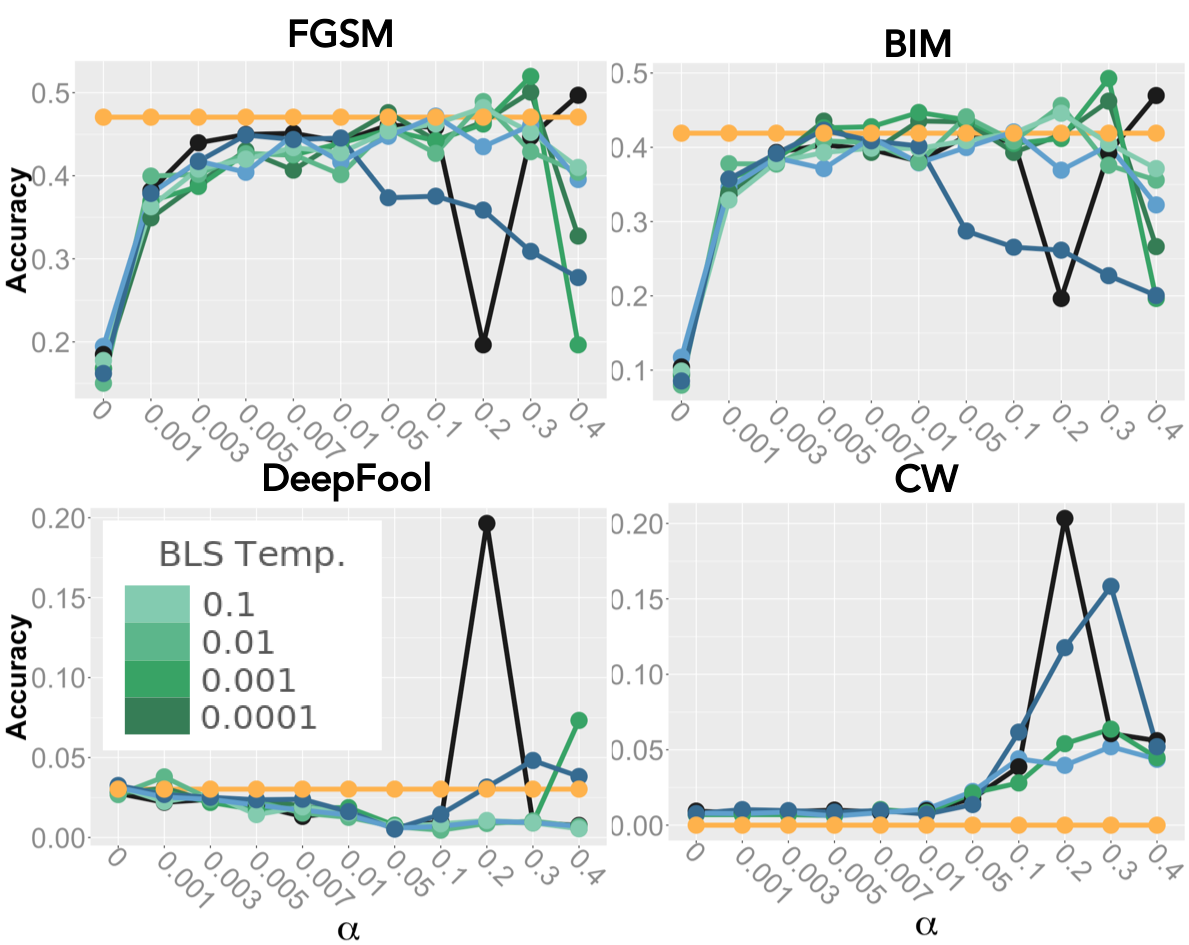}
    \caption{\textbf{SVHN LeNet}. $\epsilon = 0.05$ for FGSM and BIM}
    \label{fig:svhn}
  \end{subfigure} \hfill
  \begin{subfigure}{0.47\linewidth}
    \includegraphics[width=1\linewidth]{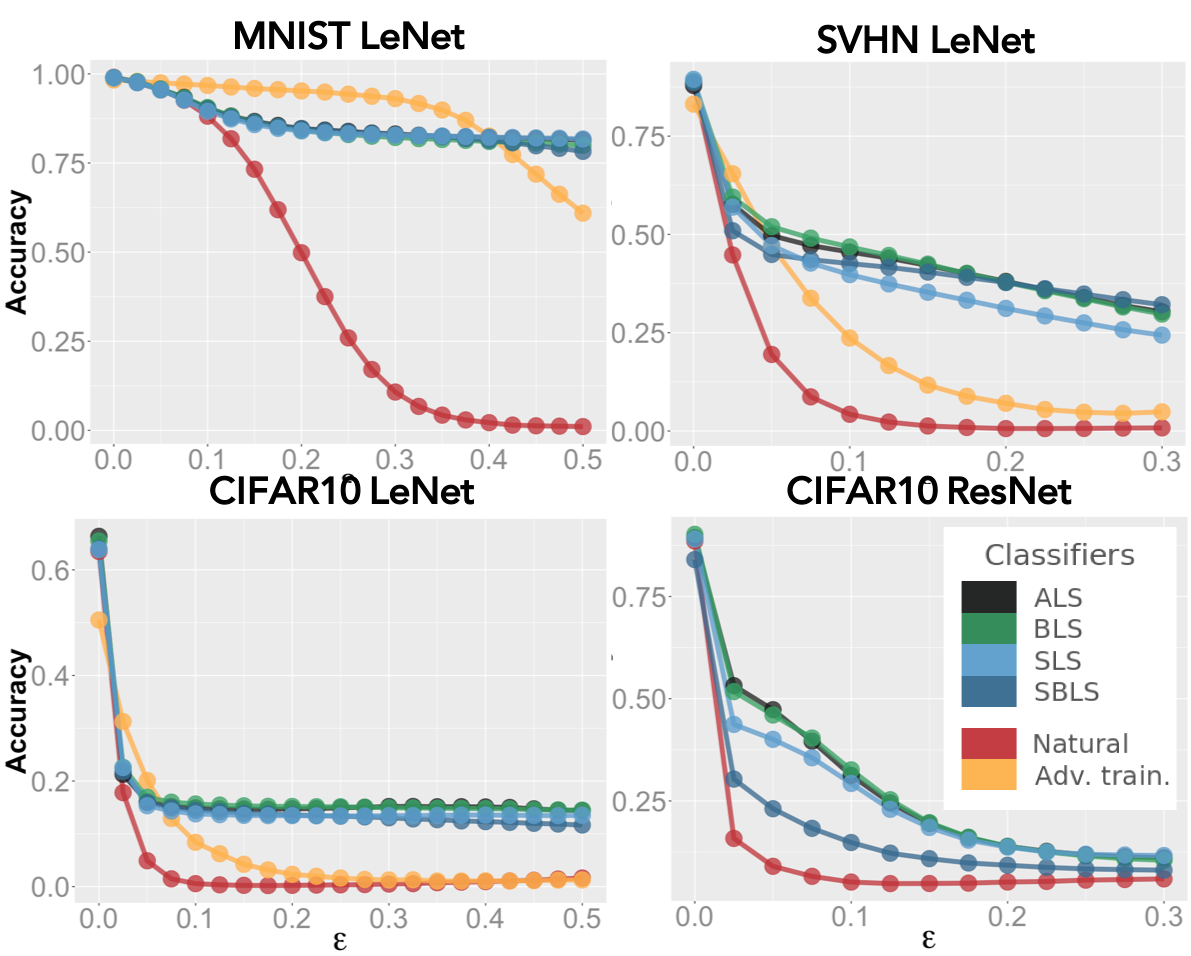}
    \caption{\textbf{FGSM:} all set-ups}
    \label{fig:fgsm}
   \end{subfigure}
  \caption{\textbf{Experiments:} Figs \ref{fig:mnist}, \ref{fig:cifar}, \ref{fig:svhn} show the evolution of adversarial accuracy as a function of $\alpha$ for different models against all attacks. Fig \ref{fig:fgsm} shows it as a function of $\epsilon$ for all models against FGSM.}  
  \vspace{-0.2cm}
\end{figure*}

\paragraph{Set-up}
We run the four different attacks\footnote{In the CIFAR10 ResNet and SVHN LeNet set-ups, the number of iteration for C\&W attack is sub-optimal. More iterations would have broken the models, however, C\&W is hardly scalable because it requires a long time to train, especially for sophisticated models like ResNet.} on different set ups (datasets MNIST, CIFAR10, SVHN where the pixel range is 1, and models MLP, LeNet, ResNet18). For comparison purposes, we also run the attacks on reference models: "natural" classifier, i.e. the same models used in the experiments but without any LS or regularization (see red lines in Fig. \ref{fig:fgsm} and $\alpha = 0$ values for Figs. \ref{fig:mnist}, \ref{fig:cifar}, \ref{fig:svhn}); and models trained using adversarial training against PGD as defined in \cite{madry2017towards} with 3 iterations (see yellow lines in Fig. \ref{fig:results}). The parameters used here for the PGD-training are: for MNIST (Linear and LeNet), $\epsilon_{PGD} = 0.25$ and $\alpha_{PGD} = 0.1$ (same notations as in \cite{madry2017towards}), and for every other set-up, $\epsilon_{PGD} = 0.05$ and $\alpha_{PGD} = 0.02$. PGD-training is a SOTA defense, and we chose to use 3 iterations here once again for comparative purposes, but this time with respect to training time: our method is as fast as a natural training, but PGD takes way longer to train (in this case with 3 iterations, the training time was at least 3 time longer on every set-up than for ALS training, sometimes even 15 times longer). PGD-training is not run on the CIFAR10 ResNet18 set-up because of this long training time.

\paragraph{Results}
Some results are shown in Fig. \ref{fig:results}, and see Tables \ref{tab:fgsm} to \ref{tab:accuarcy} (Appendix \ref{app:tables}) for more extensive results.

One can see that Label-Smoothing performs always better in terms of robustness than the natural classifiers, which indicates that LS indeed provides some kind of robustness. Moreover, our LS methods are competitive with PGD-training. On MNIST LeNet (see fig. \ref{fig:mnist}), PGD training yields better results against FGSM, BIM and DeepFool, but in these three cases, our LS methods are not far from PGD-training and SBLS is better than PGD-training against C\&W attack. On the other set-ups, LS is better against every attack except FGSM or BIM only when the strenght of the attack $\varepsilon$ is very small (but note that for SVHN and CIFAR10, the standard accuracy with PGD-training is weakened). On the whole, ALS and BLS give better results than SLS and SBLS. They thus should be preferred as default methods when implementing LS. However, overall, there is no major differences in the results obtained with the different LS methods, and the temperature hyperparameter for BLS method does not seem to have a great impact on the results. ($T=0.001$ for example is a good default value).

Altogether, we see that LS is a good candidate for improving the adversarial robustness of NNs, eventhough it will not deliver perfectly robust model. Still, the absence of computation cost makes it a very useful robustification method for lots of real life applications where NNs are required to be trained quickly.

\section{CONCLUSION}
\label{conclu}
We have proposed a general framework for Label-Smoothing (LS) as well as a new variety of LS methods (Section \ref{contrib}) as a way to alleviate the vulnerability of Deep learning image classification algorithms. We developed a theoretical understanding of LS (Theorem \ref{thm:main_thm} and Section \ref{sec:understanding}) and our results have been demonstrated empirically via experiments on real datasets (CIFAR10, MNIST, SVHN), neural-network models (MLP, LeNet, ResNet), and
SOTA
attack models (FGSM, BIM, DeepFool, C\&W).

LS improves the adversarial accuracy of neural networks, and can also boost standard accuracy, suggesting a connection between adversarial robustness and generalization. Even though our results (see Section \ref{sec:understanding}) provide evidence that LS classifiers are more robust because they take the dataset geometry into better consideration, better understanding of the adversarial phenomenon and the representations learned by NNs would be desirable.

Moreover, compared to other defense methods (e.g adversarial training), the ease of implementation of LS is very appealing: it is simple, fast, with one interpretable hyperparameter ($\alpha \in [0, 1]$). Being costless is one of the major benefits of implementing LS.
Experimental results (section \ref{exp}) could be completed with various NNs and datasets, which is also left for future works.

\clearpage
\bibliography{bibliography}
\bibliographystyle{plain}


\clearpage
\setcounter{section}{0}
\section{Appendix}
\label{appendix}

\subsection{LS optimization program}
\label{app:ls_program}

\begin{proof}[Proof of Theorem \ref{thm:main_thm}]
Recall that $\byi \in \Delta_K$ is the one-hot encoding of the example $x_i$ with label $y_i \in [\![1,K]\!]$. By direct computation, one has
\begin{align*}
    \frac{1}{n}\sum_{i=1}^n &\operatorname{SmoothCE}(x_i, q_i;\theta) = \frac{1}{n}\sum_{i=1}^n q_i^T \ln( p(x_i;\theta) ) \\
    &= -\frac{1}{n}\sum_{i=1}^n ( (1-\alpha)\boldsymbol{y_i} + \alpha q'_i )^T \ln( p(x_i;\theta) ) \\
    & = -\frac{1}{n}\sum_{i=1}^n \boldsymbol{y_i}^T \ln( p(x_i;\theta) ) + \\
    & \quad \quad \quad \quad \quad \alpha \frac{1}{n}\sum_{i=1}^n ( \boldsymbol{y_i} -  q'_i )^T \ln( p(x_i;\theta) ) \\
    & = L_n(\theta) + \alpha \frac{1}{n}\sum_{i=1}^n ( \boldsymbol{y_i} -  q'_i )^T z_i,
\end{align*}
where $z_i \in \mathbb R^K$ is the vector logits for example $x_i$. 
\end{proof}

\subsection{Analytic solution for ALS formula}
\label{app:als}

\begin{lemma}
Let $\alpha \in [0, 1]$, $t \in [\![k]\!]$, and $g \in \mathbb R^k$. The
general solution of the problem
\begin{eqnarray}
  \underset{q \in \Delta_k,\, q^{(t)} \ge 1-\alpha}{\text{argmax}} q^T g
\end{eqnarray}
is $q^*=(1-\alpha) \delta_t + \alpha\bar{q}$, where $\bar{q}$ is any solution to
the problem with $1-\alpha=0$, namely $\bar{q} \in \text{argmax}_{q \in \Delta_k}q^Tg$
\label{thm:bumbednash}
\end{lemma}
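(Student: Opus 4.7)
The plan is to reduce the constrained linear program to an unconstrained one over the full simplex by a change of variables that absorbs the inequality constraint $q^{(t)} \ge 1-\alpha$.

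First I would handle the degenerate case $\alpha = 0$ separately: the constraints force $q^{(t)} = 1$ and $q \in \Delta_k$, hence $q = \delta_t$, and the formula reduces to $q^* = \delta_t$, which is trivially correct (the $\alpha \bar q$ term vanishes). For $\alpha \in (0,1]$, I would introduce the affine change of variables $q = (1-\alpha)\delta_t + \alpha r$ and show that this establishes a bijection between the feasible set $\{q \in \Delta_k : q^{(t)} \ge 1-\alpha\}$ and the full simplex $\Delta_k$. Going forward: if $r \in \Delta_k$, then $q^{(j)} = \alpha r^{(j)} \ge 0$ for $j \ne t$, $q^{(t)} = (1-\alpha) + \alpha r^{(t)} \ge 1-\alpha$, and the components sum to $(1-\alpha) + \alpha = 1$. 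Going backward: given a feasible $q$, define $r^{(t)} := (q^{(t)} - (1-\alpha))/\alpha$ and $r^{(j)} := q^{(j)}/\alpha$ for $j \ne t$; nonnegativity is immediate from the constraints and $\sum_j r^{(j)} = 1$ follows from $\sum_j q^{(j)} = 1$.

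Next I would substitute into the objective:
\begin{equation*}
q^T g = (1-\alpha) g^{(t)} + \alpha\, r^T g.
\end{equation*}
Since the first term is a constant (independent of $r$) and $\alpha > 0$, maximizing $q^Tg$ over the feasible set is equivalent to maximizing $r^T g$ over $r \in \Delta_k$. By definition, the solution set of the latter is exactly $\operatorname{argmax}_{r \in \Delta_k} r^T g$, i.e.\ any $\bar q$ as in the statement. Pulling $\bar q$ back through the bijection gives $q^* = (1-\alpha)\delta_t + \alpha \bar q$.

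There is no real obstacle here beyond verifying that the change of variables is a genuine bijection onto $\Delta_k$; once that is in hand the result follows from linearity of the objective. The only subtlety worth flagging explicitly is that when $\alpha = 1$, the constraint $q^{(t)} \ge 0$ becomes vacuous and the feasible set is all of $\Delta_k$, so the formula correctly returns $\bar q$ itself; and when $\alpha = 0$, the formula returns $\delta_t$, as noted above.
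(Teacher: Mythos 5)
Your proof is correct and takes essentially the same route as the paper's own argument: the affine change of variables $q=(1-\alpha)\delta_t+\alpha\bar q$, which maps the full simplex onto the constrained feasible set, followed by linearity of the objective. If anything, you are slightly more careful than the paper, since you verify the bijection explicitly and treat the degenerate case $\alpha=0$, where the change of variables is not invertible.
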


\begin{proof}
Consider the invertible change of variable $q = h(\bar{q}):=(1-\alpha)\delta_1 +
\alpha\bar{q}$ which maps the simplex $\Delta_k$ unto itself, with inverse
$\bar{q} = h^{-1}(x)=\alpha^{-1}(x-(1-\alpha)\delta_1)$.

It follows, that
\begin{align*}
  \min_{q \in \Delta_k \mid q_1 \ge 1-\alpha}q^Tb &=\min_{\bar{q} \in \Delta_k \mid \bar{q}_1 \ge 0}\left( (1-\alpha)\delta_1 + \alpha \bar{q}\right)^Tb \\
  &= \min_{\bar{q} \in \Delta_k}((1-\alpha)\delta_1+\alpha\bar{q})^Tb
\end{align*}
which is attained by $$\bar{q}^* \in \text{argmin}_{\bar{q} \in
  \Delta_k}\bar{q}^Tb=\text{ConvHull}(\text{argmin}_{j=1}^k b_j),$$ yielding $ q^*=(1-\alpha)\delta_1 + \alpha\bar{q}^*$.
\end{proof}

\subsection{Proof of Lemma \ref{lemma:fading_gaussian}}
\label{app:fading_gaussian}
For a general covariance matrix in the fading Gaussian model, the best possible adversarial robustness accuracy is
$$
\max_{w \in \mathbb R^d}\text{acc}_\varepsilon(f_w)=\Psi \left( \frac{ w^T \mu - \varepsilon ||w||_1}{\|w\|_\Sigma} \right),
$$
where $\|w\|_\Sigma := \sqrt{w^T\Sigma w}$. Since the Gaussian CDF $\Phi$ is an increasing function, and the objective function in the above problem is $1$-homogeneous in $w$,
we are led to consider problems of the form
\begin{eqnarray}
\alpha &:=\min_{\|w\|_\Sigma \le 1} \varepsilon\|w\|_1 - w^Ta,
\end{eqnarray}
where $a \in \mathbb R^d$ and $\Sigma$ be a positive definite matrix of size $n$.
Of course, the solution value might not be analytically expressible in general, but there is some hope, when the matrix $\Sigma$ is diagonal. That notwithstanding, using the dual representation of the $\ell_1$-norm, one has
\begin{eqnarray}
\begin{split}
\alpha &= \min_{\|w\|_\Sigma \le 1}\max_{\|z\|_\infty \le \varepsilon}z^Tw-w^T a\\
&=\max_{\|z\|_\infty \le \varepsilon}\min_{\|w\|_\Sigma \le 1}w^T(z-a)\\
&=\max_{\|z\|_\infty \le \varepsilon}-\left(\max_{\|w\|_\Sigma \le 1}-w^T(z-a)\right)\\
&=\max_{\|z\|_\infty \le \varepsilon}-\left(\max_{\|\tilde{w}\|_2 \le 1}-\tilde{w}^T\Sigma^{-1}(z-a)\right)\\
&=\max_{\|z\|_\infty \le \varepsilon}-\|z-a\|_{\Sigma^{-1}}
=-\min_{\|z\|_\infty \le \varepsilon}\|z-a\|_{\Sigma^{-1}},
\end{split}
\label{eq:alpha}
\end{eqnarray}
where we have used \emph{Sion's minimax theorem} to interchange min and max in the first line, and we have introduced the auxiliary variable $\tilde{w}:=\Sigma^{-1/2}w$ in the fourth line. We note that given a value for the dual variable $z$, the optimal value of the primal variable $w$ is
\begin{eqnarray}
  w \propto \frac{\Sigma^{-1}(a-z)}{\|\Sigma^{-1}(a-z)\|_2}
  \label{eq:wopt}
\end{eqnarray}

The above expression ~\eqref{eq:alpha} for the optimal objective value $\alpha$ is unlikely to be computable analytically in general, due to the non-separability of the objective (even though the constraint is perfectly separable as a product of 1D constraints).
In any case, it follows from the above display that $\alpha \le 0$, with equality iff $\|a\|_\infty \le \varepsilon$.
\qed

\paragraph{Exact formula for diagonal $\Sigma$.}
In the special case where $\Sigma=\text{diag}(\sigma_1,\ldots,\sigma_2)$, the square of the optimal objective value $\alpha^2$ can be separated as
\begin{eqnarray*}
\begin{split}
\alpha \le 0,\;\alpha^2 &= \sum_{i=1}^d\min_{|z_i| \le \varepsilon}\sigma_i^{-2}(z_i-a_i)^2\\
&= \sum_{i=1}^d\sigma_i^{-2}\begin{cases}(a_i+\varepsilon)^2,&\mbox{ if }a_i \le -\varepsilon,\\ 0,&\mbox{ if }-\varepsilon < a_i \le \varepsilon,\\ (a_i-\varepsilon)^2,&\mbox{ if }a_i > \varepsilon,\end{cases}\\
&= \sum_{j=1}^d((|a_j|-\varepsilon)_+)^2, 
\end{split}
\end{eqnarray*}
which is indeed an analytical formula, albeit a very "hairy" one. 

By the ways, the optimium is attained at
\begin{eqnarray}
  \begin{split}
  z_i &= \begin{cases}
    -\varepsilon,&\mbox{ if }a_i \le -\varepsilon,\\ a_i,&\mbox{ if }-\varepsilon  < a_i \le \varepsilon,\\
    \varepsilon,&\mbox{ if }a_i > \varepsilon,    
\end{cases}\\
&=a_i-\sign(a_i)(|a_i|-\varepsilon)_+
  \end{split}
\end{eqnarray}
Plugging this into \eqref{eq:wopt} yields the optimal weights
\begin{eqnarray}
w_i \propto \sigma^{-2}_j\sign(a_j)(|a_j|-\varepsilon)_+.
\end{eqnarray}

\paragraph{Upper and lower bounds for general $\Sigma$.}
Let $\sigma_d \ge \ldots \ge \sigma_2 \ge \sigma_1 > 0$ be the eigenvalues of $\Sigma$. Then, one has
$$
\sigma_d^{-1}\|w-a\|_2 \le \|w-a\|_{\Sigma^{-1}} \le \sigma_1^{-1}\|w-a\|_2.
$$
Thus one has the bounds
$-\sqrt{\gamma/\sigma_1} \le \alpha \le -\sqrt{\gamma/\sigma_d}$,
where $\gamma := \sum_{i=1}^d\begin{cases}(a_i+\varepsilon)^2,&\mbox{ if }a_i \le -\varepsilon,\\ 0,&\mbox{ if }-\varepsilon < a_i \le \varepsilon,\\ (a_i-\varepsilon)^2,&\mbox{ if }a_i > \varepsilon.\end{cases}$

Moreover, if $\|a\|_\infty > \varepsilon$, then it isn't hard to see that $\gamma \le d(\|a\|_\infty-\varepsilon)$ (see details below), from where $\alpha \ge -\sqrt{\gamma/\sigma_1} \ge -\sqrt{d/\sigma_1}(\|a\|_\infty-1)$, which corresponds to a bound which has been observed by someone in the comments. However, by construction, this bound is potentially very loose.

Note that
$$
a_i \le -\varepsilon \implies (a_i + \varepsilon)^2 \le (\|a\|_\infty - \varepsilon)^2,
$$
and similarly
$$
a_1 > 1 \implies 0 \le (a_i - \varepsilon)^2 \le (\|a\|_\infty-\varepsilon)^2.
$$
Thus $\gamma \le d(\|a\|_\infty-\varepsilon)$.
\qed

\subsection{Experiments: numerical results}
\label{app:tables}

The following tables show the adversarial accuracy for different model set-ups, defenses and attacks. In each table, we have the adversarial accuracies for one attack (or the standard accuracies in Table \ref{tab:accuarcy}). Accuracies for LS-regularized models are presented for three different choices of $\alpha: 0.005, 0.1$ and $0.4$. For FGSM and BIM (Tables \ref{tab:fgsm} and \ref{tab:bim}), we chose 3 different values of the attack strength $\epsilon: 0.05, 0.2$ and $0.4$.
For example, the adversarial accuracy against FGSM attack with $\epsilon = 0.2$ for the BLS-regularized model with $\alpha = 0.005$ using MNIST LeNet set-up is shown in Table \ref{tab:fgsm} and is equal to $0.838$.

Moreover, we highlighted in color the best accuracy for a set-up and a particular attack (or attack and strength in the case of FGSM and BIM). Each set-up corresponds to one color (e.g. light yellow for MNIST Linear and red for SVHN LeNet). If the best accuracy is less than the accuracy obtained with random predictions (i.e. $0.1$ in all our set-ups), it is not highlighted. For example, the best accuracy against FGSM of strenght $\epsilon=0.05$ in the CIFAR LeNet set-up is equal to $0.160$ and is obtained by both a ALS and BLS-regularized NN with $\alpha=0.1$. Overall, adversarial training is better on FGSM (more colors on the adversarial training lines in Table \ref{tab:fgsm} compared to other defenses), but ALS and BLS are better on other attacks. SBLS is better only against C\&W. In Table \ref{tab:accuarcy}, we see that ALS, BLS and SLS NNs are always better or equivalent to a normal classifier (no regularization, no defense method) in terms of standard accuracy.

\begin{table*}[h!]
\begin{minipage}{.5\textwidth}
  \begin{center}
    \caption{FGSM}
    \label{tab:fgsm}
    \setlength\tabcolsep{2pt}
    \resizebox{0.97\textwidth}{!}{%
    \begin{tabular}{c  c || c | c | c || c | c | c || c | c | c ||}
      \multicolumn{2}{ c ||}{} & \multicolumn{3}{c ||}{$\epsilon = 0.05$} & \multicolumn{3}{c ||}{$\epsilon = 0.2$} & \multicolumn{3}{c ||}{$\epsilon = 0.4$} \\
      \hline
       \multicolumn{2}{ c ||}{$\alpha$ val.} & $ 0.005$ & $0.1$ & $0.4$ & $0.005$ & $0.1$ & $0.4$ & $0.005$ & $0.1$ & $0.4$ \\
      \hline \hline
      \multirow{5}{*}{ALS} & MNIST Linear & 0.832 & 0.870 & 0.857 & 0.507 & 0.526 & 0.489 & 0.450 & 0.465 & 0.432  \\ \cline{2-11}
      & MNIST LeNet & 0.957 & 0.959 & 0.954 & 0.847 & 0.732 & 0.639 & 0.822 & 0.561 & 0.130 \\ \cline{2-11}
      & CIFAR LeNet & 0.098 & 0.160 & 0.002 & 0.069 & \resc{0.147} & 0.002 & 0.067 & \resc{0.151} & 0.001 \\ \cline{2-11}
      & CIFAR ResNet  & 0.245 & 0.365 & 0.433 & 0.099 & 0.125 & 0.123 & / & / & / \\ \cline{2-11}
      & SVHN LeNet & 0.450 & 0.458 & \rese{0.497} & 0.370 & 0.302 & \rese{0.381} & / & / & / \\ \cline{2-11}
      \hline \hline
       \multirow{5}{*}{SLS} & MNIST Linear & 0.818 & 0.848 & 0.840 & 0.532 & 0.470 & 0.412 & \resa{0.506} & 0.437 & 0.384  \\ \cline{2-11}
      & MNIST LeNet  & 0.956 & 0.956 & 0.954 & 0.840 & 0.764 & 0.656 & \resb{0.823} & 0.699 & 0.229 \\ \cline{2-11}
      & CIFAR LeNet & 0.119 & 0.153 & 0.127 & 0.097 & 0.134 & 0.101 & 0.092 & 0.136 & 0.093 \\ \cline{2-11}
      & CIFAR ResNet & 0.254 & 0.345 & 0.389 & 0.098 & 0.115 & 0.116 & / & / & / \\ \cline{2-11}
      & SVHN LeNet & 0.404 & 0.472 & 0.395 & 0.353 & 0.312 & 0.195 & / & / & / \\ \cline{2-11}
       \hline \hline
       \multirow{5}{*}{BLS} & MNIST Linear  & 0.821 & 0.868 & 0.858 & 0.494 & 0.525 & 0.494 & 0.442 & 0.457 & 0.441  \\ \cline{2-11}
      & MNIST LeNet  & 0.956 & 0.958 & 0.954 & 0.838 & 0.741 & 0.642 & 0.812 & 0.616 & 0.131 \\ \cline{2-11}
      & CIFAR LeNet & 0.085 & 0.160 & 0.122 & 0.055 & 0.141 & 0.107 & 0.055 & 0.130 & 0.114 \\ \cline{2-11}
      & CIFAR ResNet & 0.252 & 0.346 & \resd{0.450} & 0.096 & 0.129 & \resd{0.138} & / & / & / \\ \cline{2-11}
      & SVHN LeNet & 0.430 & 0.442 & 0.327 & 0.352 & 0.293 & 0.153 & / & / & / \\ \cline{2-11}
      \hline \hline
      \multirow{5}{*}{SBLS} & MNIST Linear & 0.763 & 0.804 & 0.639 & 0.530 & 0.353 & 0.327 & 0.431 & 0.212 & 0.186  \\ \cline{2-11}
      & MNIST LeNet & 0.955 & 0.956 & 0.929 & 0.855 & 0.695 & 0.491 & \resb{0.836} & 0.279 & 0.141 \\ \cline{2-11}
      & CIFAR LeNet  & 0.103 & 0.143 & 0.136 & 0.061 & 0.098 & 0.068 & 0.058 & 0.085 & 0.053 \\ \cline{2-11}
      & CIFAR ResNet & 0.191 & 0.159 & 0.231 & 0.077 & 0.079 & 0.093 & / & / & / \\ \cline{2-11}
      & SVHN LeNet  & 0.449 & 0.375 & 0.277 & 0.378 & 0.157 & 0.149 & / & / & / \\ \cline{2-11}
      \hline \hline
      \multirow{5}{*}{\shortstack{Normal \\ classifier}} & MNIST Linear &  \multicolumn{3}{ c ||}{0.791} &  \multicolumn{3}{ c ||}{0.003} &  \multicolumn{3}{ c ||}{0.000}  \\ \cline{2-11}
      & MNIST LeNet & \multicolumn{3}{ c ||}{0.956} &  \multicolumn{3}{ c ||}{0.498} &  \multicolumn{3}{ c ||}{0.022} \\ \cline{2-11}
      & CIFAR LeNet &  \multicolumn{3}{ c ||}{0.049} &  \multicolumn{3}{ c ||}{0.002} &  \multicolumn{3}{ c ||}{0.009} \\ \cline{2-11}
      & CIFAR ResNet  &  \multicolumn{3}{ c ||}{0.090} &  \multicolumn{3}{ c ||}{0.052} &  \multicolumn{3}{ c ||}{/} \\ \cline{2-11}
      & SVHN LeNet &  \multicolumn{3}{ c ||}{0.195} &  \multicolumn{3}{ c ||}{0.006} &  \multicolumn{3}{ c ||}{/} \\ \cline{2-11}
      \hline \hline
      \multirow{5}{*}{\shortstack{Adv. \\ training}} & MNIST Linear &  \multicolumn{3}{ c ||}{\resa{0.970}} &  \multicolumn{3}{ c ||}{\resa{0.915}} &  \multicolumn{3}{ c ||}{0.286}  \\ \cline{2-11}
      & MNIST LeNet  &  \multicolumn{3}{ c ||}{\resb{0.975}} &  \multicolumn{3}{ c ||}{\resb{0.952}} &  \multicolumn{3}{ c ||}{\resb{0.823}} \\ \cline{2-11}
      & CIFAR LeNet &  \multicolumn{3}{ c ||}{\resc{0.201}} &  \multicolumn{3}{ c ||}{0.023} &  \multicolumn{3}{ c ||}{0.011} \\ \cline{2-11}
      & CIFAR ResNet &  \multicolumn{3}{ c ||}{/} &  \multicolumn{3}{ c ||}{/} &  \multicolumn{3}{ c ||}{/} \\ \cline{2-11}
      & SVHN LeNet &  \multicolumn{3}{ c ||}{0.475} &  \multicolumn{3}{ c ||}{0.071} &  \multicolumn{3}{ c ||}{0.075} \\ \cline{2-11}
    \end{tabular}
    }
  \end{center}
\end{minipage}%
\begin{minipage}{.5\textwidth}
  \begin{center}
    \caption{BIM}
    \label{tab:bim}
    \setlength\tabcolsep{2pt}
    \resizebox{0.97\textwidth}{!}{%
    \begin{tabular}{c  c || c | c | c || c | c | c || c | c | c ||}
      \multicolumn{2}{ c ||}{} & \multicolumn{3}{c ||}{$\epsilon = 0.05$} & \multicolumn{3}{c ||}{$\epsilon = 0.2$} & \multicolumn{3}{c ||}{$\epsilon = 0.4$} \\
      \hline
       \multicolumn{2}{ c ||}{$\alpha$ val.} & $ 0.005$ & $0.1$ & $0.4$ & $0.005$ & $0.1$ & $0.4$ & $0.005$ & $0.1$ & $0.4$ \\
      \hline \hline
      \multirow{5}{*}{ALS} & MNIST Linear & 0.816 & 0.854 & 0.845 & 0.429 & 0.485 & 0.456 & 0.420 & 0.472 & 0.435  \\ \cline{2-11}
      & MNIST LeNet & 0.947 & 0.946 & 0.945 & 0.813 & 0.614 & 0.606 & 0.811 & 0.560 & 0.484 \\ \cline{2-11}
      & CIFAR LeNet & 0.064 & 0.140 & 0.000 & 0.055 & \resc{0.137} & 0.000 & 0.054 & \resc{0.136} & 0.000 \\ \cline{2-11}
      & CIFAR ResNet  & 0.102 & 0.167 & 0.248 & 0.046 & 0.076 & 0.074 & / & / & / \\ \cline{2-11}
      & SVHN LeNet & 0.403 & 0.408 & \rese{0.470} & 0.386 & 0.324 & \rese{0.435} & / & / & / \\ \cline{2-11}
      \hline \hline
       \multirow{5}{*}{SLS} & MNIST Linear & 0.801 & 0.829 & 0.826 & 0.495 & 0.447 & 0.398 & 0.492 & 0.439 & 0.391  \\ \cline{2-11}
      & MNIST LeNet  & 0.946 & 0.942 & 0.942 & 0.817 & 0.700 & 0.398 & 0.815 & 0.689 & 0.165 \\ \cline{2-11}
      & CIFAR LeNet & 0.093 & 0.131 & 0.100 & 0.087 & 0.128 & 0.096 & 0.087 & 0.128 & 0.094 \\ \cline{2-11}
      & CIFAR ResNet & 0.135 & 0.142 & 0.230 & 0.051 & 0.035 & 0.046 & / & / & / \\ \cline{2-11}
      & SVHN LeNet & 0.371 & 0.421 & 0.322 & 0.362 & 0.345 & 0.193 & / & / & / \\ \cline{2-11}
       \hline \hline
       \multirow{5}{*}{BLS} & MNIST Linear  & 0.803 & 0.853 & 0.841 & 0.534 & 0.481 & 0.458 & \resa{0.528} & 0.470 & 0.443  \\ \cline{2-11}
      & MNIST LeNet  & 0.946 & 0.944 & 0.947 & 0.835 & 0.645 & 0.532 & \resb{0.834} & 0.603 & 0.389 \\ \cline{2-11}
      & CIFAR LeNet & 0.062 & 0.140 & 0.103 & 0.056 & 0.136 & 0.097 & 0.055 & 0.135 & 0.097 \\ \cline{2-11}
      & CIFAR ResNet & 0.153 & 0.142 & \resd{0.265} & 0.060 & 0.043 & 0.075 & / & / & / \\ \cline{2-11}
      & SVHN LeNet & 0.435 & 0.393 & 0.266 & 0.424 & 0.322 & 0.142 & / & / & / \\ \cline{2-11}
      \hline \hline
      \multirow{5}{*}{SBLS} & MNIST Linear & 0.736 & 0.772 & 0.597 & 0.470 & 0.227 & 0.299 & 0.370 & 0.091 & 0.252  \\ \cline{2-11}
      & MNIST LeNet & 0.945 & 0.945 & 0.914 & 0.841 & 0.359 & 0.239 & 0.808 & 0.126 & 0.072 \\ \cline{2-11}
      & CIFAR LeNet  & 0.074 & 0.114 & 0.090 & 0.057 & 0.077 & 0.036 & 0.055 & 0.068 & 0.028 \\ \cline{2-11}
      & CIFAR ResNet & 0.054 & 0.021 & 0.045 & 0.010 & 0.008 & 0.018 & / & / & / \\ \cline{2-11}
      & SVHN LeNet  & 0.423 & 0.265 & 0.200 & 0.383 & 0.073 & 0.116 & / & / & / \\ \cline{2-11}
      \hline \hline
      \multirow{5}{*}{\shortstack{Normal \\ classifier}} & MNIST Linear &  \multicolumn{3}{ c ||}{0.776} &  \multicolumn{3}{ c ||}{0.001} &  \multicolumn{3}{ c ||}{0.000}  \\ \cline{2-11}
      & MNIST LeNet & \multicolumn{3}{ c ||}{0.946} &  \multicolumn{3}{ c ||}{0.114} &  \multicolumn{3}{ c ||}{0.000} \\ \cline{2-11}
      & CIFAR LeNet &  \multicolumn{3}{ c ||}{0.015} &  \multicolumn{3}{ c ||}{0.000} &  \multicolumn{3}{ c ||}{0.000} \\ \cline{2-11}
      & CIFAR ResNet  &  \multicolumn{3}{ c ||}{0.003} &  \multicolumn{3}{ c ||}{0.000} &  \multicolumn{3}{ c ||}{/} \\ \cline{2-11}
      & SVHN LeNet &  \multicolumn{3}{ c ||}{0.117} &  \multicolumn{3}{ c ||}{0.000} &  \multicolumn{3}{ c ||}{/} \\ \cline{2-11}
      \hline \hline
      \multirow{5}{*}{\shortstack{Adv. \\ training}} & MNIST Linear &  \multicolumn{3}{ c ||}{\resa{0.970}} &  \multicolumn{3}{ c ||}{\resa{0.899}} &  \multicolumn{3}{ c ||}{0.288}  \\ \cline{2-11}
      & MNIST LeNet  &  \multicolumn{3}{ c ||}{\resb{0.974}} &  \multicolumn{3}{ c ||}{\resb{0.940}} &  \multicolumn{3}{ c ||}{0.498} \\ \cline{2-11}
      & CIFAR LeNet &  \multicolumn{3}{ c ||}{\resc{0.168}} &  \multicolumn{3}{ c ||}{0.003} &  \multicolumn{3}{ c ||}{0.000} \\ \cline{2-11}
      & CIFAR ResNet &  \multicolumn{3}{ c ||}{/} &  \multicolumn{3}{ c ||}{/} &  \multicolumn{3}{ c ||}{/} \\ \cline{2-11}
      & SVHN LeNet &  \multicolumn{3}{ c ||}{0.419} &  \multicolumn{3}{ c ||}{0.014} &  \multicolumn{3}{ c ||}{0.000} \\ \cline{2-11}
    \end{tabular}}
  \end{center}
\end{minipage}%
\end{table*}
 
\begin{table*}[!h]
\begin{minipage}{.325\textwidth}
  \begin{center}
    \caption{DeepFool}
    \label{tab:deepfool}
    \setlength\tabcolsep{2pt}
    \resizebox{0.97\textwidth}{!}{%
    \begin{tabular}{c  c || c | c | c ||}
      \multicolumn{2}{ c ||}{} & \multicolumn{3}{c ||}{Adv. acc.} \\
      \hline
       \multicolumn{2}{ c ||}{$\alpha$ val.} & $ 0.005$ & $0.1$ & $0.4$ \\
      \hline \hline
      \multirow{5}{*}{ALS} & MNIST Linear  & 0.037 & 0.092 & 0.341  \\ \cline{2-5}
      & MNIST LeNet & 0.047 & 0.299 & 0.529 \\ \cline{2-5}
      & CIFAR LeNet  & 0.005 & 0.006 & 0.001 \\ \cline{2-5}
      & CIFAR ResNet & 0.040 & 0.030 & 0.33 \\ \cline{2-5}
      & SVHN LeNet & 0.020 & 0.010 & 0.008 \\ \cline{2-5}
      \hline \hline
       \multirow{5}{*}{SLS} & MNIST Linear & 0.036 & 0.080 & \resa{0.683}  \\ \cline{2-5}
      & MNIST LeNet & 0.035 & 0.183 & 0.380 \\ \cline{2-5}
      & CIFAR LeNet  & 0.006 & 0.007 & 0.008 \\ \cline{2-5}
      & CIFAR ResNet  & 0.073 & 0.049 & 0.057 \\ \cline{2-5}
      & SVHN LeNet  & 0.020 & 0.007 & 0.007 \\ \cline{2-5}
       \hline \hline
       \multirow{5}{*}{BLS} & MNIST Linear & 0.033 & 0.088 & 0.414  \\ \cline{2-5}
      & MNIST LeNet  & 0.031 & 0.314 & 0.500 \\ \cline{2-5}
      & CIFAR LeNet & 0.007 & 0.008 & 0.008 \\ \cline{2-5}
      & CIFAR ResNet & 0.051 & 0.059 & 0.032 \\ \cline{2-5}
      & SVHN LeNet  & 0.024 & 0.006 & 0.006 \\ \cline{2-5}
      \hline \hline
       \multirow{5}{*}{SBLS} & MNIST Linear  & 0.035 & 0.152 & 0.516  \\ \cline{2-5}
      & MNIST LeNet  & 0.028 & 0.142 & 0.305 \\ \cline{2-5}
      & CIFAR LeNet  & 0.005 & 0.006 & 0.006 \\ \cline{2-5}
      & CIFAR ResNet  & 0.050 & 0.037 & 0.064 \\ \cline{2-5}
      & SVHN LeNet  & 0.024 & 0.014 & 0.038 \\ \cline{2-5}
      \hline \hline
       \multirow{5}{*}{\shortstack{Normal \\ classifier}} & MNIST Linear &  \multicolumn{3}{ c ||}{0.025}  \\ \cline{2-5}
      & MNIST LeNet & \multicolumn{3}{ c ||}{0.050}  \\ \cline{2-5}
      & CIFAR LeNet & \multicolumn{3}{ c ||}{0.009} \\ \cline{2-5}
      & CIFAR ResNet & \multicolumn{3}{ c ||}{0.036} \\ \cline{2-5}
      & SVHN LeNet & \multicolumn{3}{ c ||}{0.031} \\ \cline{2-5}
      \hline \hline
       \multirow{5}{*}{\shortstack{Adv. \\ training}} & MNIST Linear & \multicolumn{3}{ c ||}{0.07}  \\ \cline{2-5}
      & MNIST LeNet  & \multicolumn{3}{ c ||}{\resb{0.571}} \\ \cline{2-5}
      & CIFAR LeNet  & \multicolumn{3}{ c ||}{0.008} \\ \cline{2-5}
      & CIFAR ResNet  & \multicolumn{3}{ c ||}{/} \\ \cline{2-5}
      & SVHN LeNet  & \multicolumn{3}{ c ||}{0.030} \\ \cline{2-5}
    \end{tabular}}
  \end{center}
  \end{minipage}
  \begin{minipage}{.325\textwidth}
  \begin{center}
    \caption{CW}
    \label{tab:cw}
    \setlength\tabcolsep{2pt}
    \resizebox{0.97\textwidth}{!}{%
    \begin{tabular}{c  c || c | c | c ||}
      \multicolumn{2}{ c ||}{} & \multicolumn{3}{c ||}{Adv. acc.} \\
      \hline
       \multicolumn{2}{ c ||}{$\alpha$ val.} & $ 0.005$ & $0.1$ & $0.4$ \\
      \hline \hline
      \multirow{5}{*}{ALS} & MNIST Linear  & 0.014 & 0.056 & 0.073  \\ \cline{2-5}
      & MNIST LeNet & 0.020 & 0.042 & 0.084 \\ \cline{2-5}
      & CIFAR LeNet  & 0.000 & 0.002 & 0.000 \\ \cline{2-5}
      & CIFAR ResNet & 0.022 & 0.098 & 0.205 \\ \cline{2-5}
      & SVHN LeNet & 0.010 & 0.039 & 0.056 \\ \cline{2-5}
      \hline \hline
       \multirow{5}{*}{SLS} & MNIST Linear & 0.013 & 0.025 & 0.031  \\ \cline{2-5}
      & MNIST LeNet & 0.022 & 0.046 & 0.058 \\ \cline{2-5}
      & CIFAR LeNet  & 0.000 & 0.004 & 0.000 \\ \cline{2-5}
      & CIFAR ResNet  & 0.029 & 0.087 & 0.159 \\ \cline{2-5}
      & SVHN LeNet  & 0.006 & 0.044 & 0.044 \\ \cline{2-5}
       \hline \hline
       \multirow{5}{*}{BLS} & MNIST Linear & 0.014 & 0.035 & 0.059  \\ \cline{2-5}
      & MNIST LeNet  & 0.018 & 0.046 & 0.062 \\ \cline{2-5}
      & CIFAR LeNet & 0.000 & 0.004 & 0.000 \\ \cline{2-5}
      & CIFAR ResNet & 0.019 & 0.089 & 0.231 \\ \cline{2-5}
      & SVHN LeNet  & 0.006 & 0.028 & 0.045 \\ \cline{2-5}
      \hline \hline
       \multirow{5}{*}{SBLS} & MNIST Linear  & 0.017 & 0.071 & 0.007  \\ \cline{2-5}
      & MNIST LeNet  & 0.016 & 0.128 & \resb{0.474} \\ \cline{2-5}
      & CIFAR LeNet  & 0.002 & 0.000 & 0.014 \\ \cline{2-5}
      & CIFAR ResNet  & 0.063 & \resd{0.173} & 0.106 \\ \cline{2-5}
      & SVHN LeNet  & 0.009 & 0.062 & 0.052 \\ \cline{2-5}
      \hline \hline
       \multirow{5}{*}{\shortstack{Normal \\ classifier}} & MNIST Linear &  \multicolumn{3}{ c ||}{0.015}  \\ \cline{2-5}
      & MNIST LeNet & \multicolumn{3}{ c ||}{0.026}  \\ \cline{2-5}
      & CIFAR LeNet & \multicolumn{3}{ c ||}{0.000} \\ \cline{2-5}
      & CIFAR ResNet & \multicolumn{3}{ c ||}{0.015} \\ \cline{2-5}
      & SVHN LeNet & \multicolumn{3}{ c ||}{0.031} \\ \cline{2-5}
      \hline \hline
       \multirow{5}{*}{\shortstack{Adv. \\ training}} & MNIST Linear & \multicolumn{3}{ c ||}{0.108}  \\ \cline{2-5}
      & MNIST LeNet  & \multicolumn{3}{ c ||}{0.442} \\ \cline{2-5}
      & CIFAR LeNet  & \multicolumn{3}{ c ||}{0.000} \\ \cline{2-5}
      & CIFAR ResNet  & \multicolumn{3}{ c ||}{/} \\ \cline{2-5}
      & SVHN LeNet  & \multicolumn{3}{ c ||}{0.000} \\ \cline{2-5}
    \end{tabular}}
  \end{center}
  \end{minipage}
  \begin{minipage}{.325\textwidth}
   \begin{center}
    \caption{Std. accuracies}
    \label{tab:accuarcy}
    \setlength\tabcolsep{2pt}
    \resizebox{0.97\textwidth}{!}{%
    \begin{tabular}{c  c || c | c | c ||}
      \multicolumn{2}{ c ||}{} & \multicolumn{3}{ c ||}{Std. acc.} \\
      \hline
       \multicolumn{2}{ c ||}{$\alpha$ val.} & $0.005$ & $0.1$ & $0.4$ \\
      \hline \hline
      \multirow{5}{*}{ALS} & MNIST Linear & 0.979 & \resa{0.981} & 0.976  \\ \cline{2-5}
      & MNIST LeNet & \resb{0.990} & \resb{0.990} & 0.989  \\ \cline{2-5}
      & CIFAR LeNet & 0.623 & \resc{0.664} & 0.148 \\ \cline{2-5}
      & CIFAR ResNet & 0.887 & 0.890 & 0.889 \\ \cline{2-5}
      & SVHN LeNet & 0.890 & \rese{0.894} & 0.879 \\ \cline{2-5}
      \hline \hline
       \multirow{5}{*}{SLS} & MNIST Linear & 0.978 & \resa{0.981} & 0.975  \\ \cline{2-5}
      & MNIST LeNet & 0.989 & \resb{0.990} & 0.986  \\ \cline{2-5}
      & CIFAR LeNet & 0.628 & 0.638 & 0.643 \\ \cline{2-5}
      & CIFAR ResNet & 0.885 & 0.894 & 0.895 \\ \cline{2-5}
      & SVHN LeNet & 0.892 & \rese{0.894} & 0.889  \\ \cline{2-5}
       \hline \hline
       \multirow{5}{*}{BLS} & MNIST Linear & 0.978 & \resa{0.981} & 0.976   \\ \cline{2-5}
      & MNIST LeNet & 0.989 & \resb{0.990} & 0.989  \\ \cline{2-5}
      & CIFAR LeNet & 0.639 & 0.651 & 0.622  \\ \cline{2-5}
      & CIFAR ResNet & 0.888 & 0.889 & \resd{0.897}  \\ \cline{2-5}
      & SVHN LeNet & 0.891 & 0.890 & 0.841 \\ \cline{2-5}
      \hline \hline
      \multirow{5}{*}{SBLS} & MNIST Linear & 0.977 & 0.977 & 0.949  \\ \cline{2-5}
      & MNIST LeNet & 0.989 & 0.988 & 0.975  \\ \cline{2-5}
      & CIFAR LeNet & 0.628 & 0.619 & 0.572  \\ \cline{2-5}
      & CIFAR ResNet & 0.883 & 0.881 & 0.840  \\ \cline{2-5}
      & SVHN LeNet & 0.886 & 0.883 & 0.816 \\ \cline{2-5}
      \hline \hline
      \multirow{5}{*}{\shortstack{Normal \\ classifier}} & MNIST Linear & \multicolumn{3}{ c ||}{0.980}  \\ \cline{2-5}
      & MNIST LeNet & \multicolumn{3}{ c ||}{\resb{0.990}} \\ \cline{2-5}
      & CIFAR LeNet & \multicolumn{3}{ c ||}{0.635} \\ \cline{2-5}
      & CIFAR ResNet & \multicolumn{3}{ c ||}{0.886}  \\ \cline{2-5}
      & SVHN LeNet & \multicolumn{3}{ c ||}{0.884}  \\ \cline{2-5}
      \hline \hline
      \multirow{5}{*}{\shortstack{Adv. \\ training}} & MNIST Linear & \multicolumn{3}{ c ||}{\resa{0.981}}  \\ \cline{2-5}
      & MNIST LeNet & \multicolumn{3}{ c ||}{0.982}  \\ \cline{2-5}
      & CIFAR LeNet & \multicolumn{3}{ c ||}{0.505}  \\ \cline{2-5}
      & CIFAR ResNet & \multicolumn{3}{ c ||}{/}  \\ \cline{2-5}
      & SVHN LeNet & \multicolumn{3}{ c ||}{0.831} \\ \cline{2-5}
    \end{tabular}}
  \end{center}
  \end{minipage}
\end{table*}

\end{document}